\pdfoutput=1
\documentclass{article} 
\usepackage{iclr2023_conference,times}


\usepackage{amsmath,amsfonts,bm}









\def\eqref#1{equation~\ref{#1}}
\def\Eqref#1{Equation~\ref{#1}}








\def\1{\bm{1}}








\def\vtheta{{\bm{\theta}}}

\def\vh{{\bm{h}}}

\def\vu{{\bm{u}}}

\def\vx{{\bm{x}}}
\def\vy{{\bm{y}}}
\def\vz{{\bm{z}}}



\DeclareMathAlphabet{\mathsfit}{\encodingdefault}{\sfdefault}{m}{sl}
\SetMathAlphabet{\mathsfit}{bold}{\encodingdefault}{\sfdefault}{bx}{n}











\newcommand{\R}{\mathbb{R}}



\PassOptionsToPackage{hyphens}{url}
\usepackage{hyperref}
\usepackage{url}

\usepackage{graphicx}
\usepackage{subfigure}
\usepackage{amssymb}
\usepackage{amsmath}
\usepackage{amsthm}
\usepackage{titlesec}
\usepackage{verbatim}
\usepackage{xcolor}
\usepackage{enumitem}
\usepackage{longtable}

\usepackage{multirow}
\usepackage{array}
\usepackage{booktabs}
\setlength{\heavyrulewidth}{1.5pt}
\setlength{\abovetopsep}{4pt}
\usepackage{graphicx,multirow}

\newtheorem{definition}{Definition}

\newtheorem{remark}{Remark}
\newtheorem{theorem}{Theorem}

\providecommand{\customgenericname}{}
\newcommand{\newcustomtheorem}[2]{%
	\newenvironment{#1}[1]
	{%
		\renewcommand\customgenericname{#2}%
		\renewcommand\theinnercustomgeneric{##1}%
		\innercustomgeneric
	}
	{\endinnercustomgeneric}
}
\newcustomtheorem{customthm}{Theorem}
\newcustomtheorem{customrem}{Remark}
\newcustomtheorem{customlem}{Lemma}
\newcustomtheorem{customcor}{Corollary}

\newenvironment{talign}
 {\let\displaystyle\textstyle\align}
 {\endalign}
\newenvironment{talign*}
 {\let\displaystyle\textstyle\csname align*\endcsname}
 {\endalign}

\makeatletter
\def\@fnsymbol#1{\ensuremath{\ifcase#1\or \dagger\or \ddagger\or
   \mathsection\or *\or \mathparagraph\or \|\or **\or \dagger\dagger
   \or \ddagger\ddagger \else\@ctrerr\fi}}
\makeatother


\newcommand{\eos}{\langle eos \rangle}
\newcommand{\vrho}{\boldsymbol{\rho}}

\title{A Non-monotonic Self-terminating Language Model}


\author{Eugene Choi\thanks{New York University}\\
\scriptsize{\texttt{eugene.choi@nyu.edu}} \\
\And
Kyunghyun Cho\footnotemark[1]~~\thanks{Prescient Design, Genentech}~~\thanks{CIFAR Fellow}\\
\scriptsize{\texttt{kyunghyun.cho@nyu.edu}}\\
\And
Cheolhyoung Lee\footnotemark[1]~~\thanks{Corresponding author.}\\
\scriptsize{\texttt{cheolhyoung.lee@nyu.edu}} \\
}

%

\iclrfinalcopy 
\begin{document}

\maketitle
\begin{abstract}
Recent large-scale neural autoregressive sequence models have shown impressive performances on a variety of natural language generation tasks.
However, their generated sequences often exhibit degenerate properties such as non-termination, undesirable repetition, and premature termination, 
when generated with
decoding algorithms such as greedy search, beam search, top-$k$ sampling, and nucleus sampling. In this paper, we focus on the problem of non-terminating sequences resulting from an incomplete decoding algorithm. We first define an incomplete probable decoding algorithm which includes greedy search, top-$k$ sampling, and nucleus sampling, beyond the incomplete decoding algorithm originally put forward by \citet{welleck-etal-2020-consistency}. We then propose a non-monotonic self-terminating language model, which significantly relaxes the constraint of monotonically increasing termination probability in the originally proposed self-terminating language model by \citet{welleck-etal-2020-consistency}, to address the issue of non-terminating sequences when using incomplete probable decoding algorithms. We prove that our proposed model prevents non-terminating sequences 
when using not only incomplete probable decoding algorithms but also beam search. We empirically validate our model on sequence completion 
tasks with various architectures.

\end{abstract}

\section{Introduction}

Autoregressive neural sequence models \citep{Bengio2000ANP} have been widely used for various natural language generation tasks such as language modeling \citep{brown2020language, chowdhery2022palm}, machine translation \citep{bahdanau2014neural}, and conversational dialogue modeling \citep{vinyals2015neural}. Furthermore, large-scale autoregressive neural sequence models have shown unprecedented ability to generate fluent, human-like texts \citep{vaswani2017attention, brown2020language}. Despite their success, the autoregressive neural sequence models have shown undesirable behaviors: non-termination \citep{welleck-etal-2020-consistency}, degenerate repetition \citep{welleck2019neural, Holtzman2020TheCC}, and premature termination \citep{koehn2017six, stahlberg2019nmt}. In this paper, we focus on how to prevent non-termination when using a given decoding algorithm. 

\emph{Non-termination} is the problem that a language model generates 
infinitely long 
sequences with a positive probability from our language model when using a given decoding algorithm. \citet{welleck-etal-2020-consistency} pointed out that this issue comes from a discrepancy between the distribution of our language model and its induced distribution by an incomplete decoding algorithm. They formalized this disparity by the notion of \emph{inconsistency} where our language model generates non-terminating sequences with a positive probability from the decoding algorithm. To avoid 
this inconsistency,
they proposed a \emph{self-terminating (ST) language model} that uses new parametrization for its classifier rather than usual softmax parametrization. They proved that the ST language model is consistent with respect to greedy search, beam search, top-$k$ sampling \citep{fan-etal-2018-hierarchical} as well as nucleus sampling \citep{Holtzman2020TheCC}. 

The ST language model increases the termination probability of each sequence \emph{monotonically} to 1,
but this parametrization is not appropriate for learning our natural language. As an illustrative example, suppose there are two sequences in our dataset: \textit{``I am a boy''} vs. \textit{``I am a boy, and you are a girl.''}. Our language model trained on this dataset may or may not terminate after the former. Once our model decides not to end, it should dramatically reduce the termination probability to continue.
The ST language model, which monotonically increase the termination probability, cannot 
capture such a case, where one sequence is a prefix of another.
We thus propose a \emph{non-monotonic} self-terminating (NMST) language model which guarantees the consistency with respect to greedy search, beam search, top-$k$ sampling, and nucleus sampling without monotonically increasing termination probability. 

The NMST language model encourages the termination probability of each sequence to converge to 1 through NMST parametrization however without monotonicity. 
Even under this relaxation, the proposed NMST language model provably prevents any non-terminating sequence 
resulting from greedy search, beam search, top-$k$ sampling, and nucleus sampling, which we refer to as 
\emph{incomplete probable decoding algorithms}. 

We conduct experiments validating the effectiveness of our NMST language models on sequence completion tasks, as was done in earlier studies.
We test NMST parametrization with various architectures. 
Specifically, 
we train RNN \citep{elman1990finding} and LSTM \citep{hochreiter1997long} on WikiText-2 \citep{merity2016pointer}. We additionally finetune GPT-2 \citep{radford2019language} on WikiText-103 \citep{merity2016pointer}. 
Across all these setups, NMST parametrization effectively prevents non-terminating sequences, especially when compared to softmax parametrization. Furthermore, we see that our NMST parametrization has better (lower) perplexities than those of ST parametrization, 
confirming the importance of relaxing the monotonic termination probability.

\section{Notations and Background}
\label{sec:background}

\subsection{Notations for Autoregressive Neural Sequence Models}
\label{subsec:notation_auto}

{\bf Sequences, vocabulary, and $\eos$} 
We view an instance (e.g., a sentence and a paragraph) as a \emph{sequence} $\vy=(y_1,y_2,\dots, y_T)$, where each $y_t$ is an element from a pre-defined finite set of discrete tokens, referred to as a \emph{vocabulary} $\mathcal{V}$. $\mathcal{V}$ includes a special symbol $\eos$ that only appears at the \emph{end of the sequence}. Every sequence $\vy$ must end with $\eos$. We write the length of $\vy$ as $|\vy|$, and $y_{|\vy|}=\eos$. We call $\vy$ a \emph{non-terminating sequence}, $|\vy|=\infty$, if $y_t\neq\eos$ for all $t$.

{\bf Embedding vectors} 
Each token $v\in\mathcal{V}$ is not a numerical vector so that we use an \emph{embedding vector} $\vu_v\in\R^m$ to represent $v$.
To capture the notion of similarity between discrete tokens 
efficiently, we use an embedding vector $\vu_v\in\R^m$ to project $v$ into continuous embedding space \citep{Bengio2000ANP, mikolov2013distributed, mikolov2013efficient, levy2014neural}.

{\bf Autoregressive neural sequence models}
\citet{Bengio2000ANP} proposed an \emph{autoregressive neural sequence model} parametrized by $\vtheta\in\R^k$. They factorized $p_\vtheta(\vy|\vx)$ into a product of the conditional probability of each token given all the previous tokens and an input in a predefined order as follows:
\begin{talign}\label{eq:autoreg_lm}
    p_\vtheta(\vy|\vx) = \prod_{t=1}^T p_\vtheta(\vy_t| \vy_{<t},\vx),
\end{talign}
where $\vy_{<t}$ is a \emph{$t$-prefix} of $\vy$ and $\vx$ is an input referred to as a \emph{context}. For example, $\vx$ represents either a prompt in sequence completion or a source-side sequence in machine translation.

There are several popular architectures for $p_\vtheta$ such as RNN \citep{elman1990finding}, LSTM \citep{hochreiter1997long}, GRU \citep{cho2014properties}, and Transformer \citep{vaswani2017attention}. As shown in \eqref{eq:lm_softmax}, all these models utilize softmax classifiers. In this paper, we modify the parametrization of their softmax classifiers to prevent non-terminating sequences. We thus write a \emph{vanilla language model}, regardless of its choice of architecture, that uses the original softmax parametrization as $p_\vtheta^{va}$ defined in Definition \ref{def:lm}. 
\begin{definition}
\label{def:lm} 
    A vanilla language model $p^{va}_\vtheta$ computes the conditional probability of each token given a $t$-prefix $\vy_{<t}$ and a context $\vx$ at each time step $t$ as follows:
    \begin{talign}\label{eq:lm_softmax}
        p^{va}_\vtheta (y_t = v| \vy_{<t},\vx) 
        = 
        \exp(\vu_v^\top \vh_t)/\sum_{v'\in\mathcal{V}}\exp(\vu_{v'}^\top \vh_t),
    \end{talign}
    where $\vh_t = f_\vtheta(\vy_t, \vh_{t-1})$ with $\vh_0=\boldsymbol{0}$.\footnote{This definition stands for RNN, LSTM, and GRU. For Transformer, $\vh_t=f_{\vtheta}(\vy_t, \vh_{1:(t-1)})$.
    } 
    
\end{definition}

{\bf Training} 
For a given dataset, $\mathcal{D} = \left\{\left(\vx^{(n)},\vy^{(n)}\right)\right\}_{n=1}^N$, we maximize the joint probability assigned to the sequences in our training dataset to find an optimal parameter configuration $\vtheta^\star$ as follows:
\begin{talign}
\label{eq:mle_loss}
    \vtheta^\star 
    = 
    \arg\max_\vtheta\sum_{n=1}^{N} \sum_{t=1}^{T^{(n)}}\log p_\vtheta\big{(}\vy_t^{(n)}\big{|}\vy_{<t}^{(n)},\vx^{(n)}\big{)}.
\end{talign}

\subsection{Incomplete Probable Decoding Algorithms}\label{subsec:incomplete_prob_dec}
An autoregressive language model $p_\vtheta$ predicts the likelihood of a sequence $\vy$ given a context $\vx$. Its autoregressive factorization in \eqref{eq:autoreg_lm} requires a recursive process for every $t$ to infer. Hence, at inference time, we use a decoding algorithm, defined below, to generate sequences from $p_\vtheta$.
\begin{definition}
\label{def:decoding_alg}
    Let $\mathcal{Y}$ be a collection of $\vy$ such that $\vy=(y_1,y_2,\cdots,y_T)$ where $T\in\{1,2,\cdots\}$ and $y_t\in\mathcal{V}$. 
    A decoding algorithm 
    $\mathcal{S}$ is a function that maps $p_\vtheta$ to $q_{\mathcal{S}(p_\vtheta)}$ which is a probability distribution over $\mathcal{Y}$. A decoded sentence $\hat{\vy}$ given $\vx$ by $\mathcal{S}$ from $p_\vtheta$ is a random sample from $q_{\mathcal{S}(p_\vtheta)}(\vy|\vx).$ 
\end{definition}

To generate a high quality sequence from $p_\vtheta$, a decoding algorithm assumes that a higher quality sequence has a higher probability of $p_\vtheta$ than others. For instance, maximum \emph{a posteriori} (MAP) decoding algorithm $\mathcal{S}_{map}$ gives the most probable sequence $\vy^\star$ given a context $\vx$ from $p_\vtheta$:
\begin{talign}\label{eq:map_dec}
    \vy^\star=\arg\max_{\vy\in\mathcal{Y}}p_\theta(\vy|\vx),
\end{talign}
by setting $q_{\mathcal{S}_{map}(p_\vtheta)}(\vy=\vy^\star|\vx)=1$ and  $q_{\mathcal{S}_{map}(p_\vtheta)}(\vy=\vy'|\vx)=0$ where $\vy'\in\mathcal{Y}\setminus\{\vy^\star\}$. Unfortunately, $\mathcal{S}_{map}$ is intractable since \eqref{eq:map_dec} requires an exhaustive search over the sequence space $\mathcal{Y}$. 
Hence, in practice, we utilize 
\emph{incomplete probable decoding algorithms} defined as follows:


\begin{definition}
\label{def:incom_prob_dec}
    A decoding algorithm $\mathcal{S}$ is incomplete and probable if there exists $\mathcal{V}_t \subsetneq \mathcal{V}$ such that 
    \begin{talign}\label{eq:incomplete_dec}
        \sum_{v\in\mathcal{V}_t} q_{\mathcal{S}(p_\vtheta)}(y_t=v|\vy_{<t},\vx) 
        = 
        1
    \end{talign}
    and 
    \begin{talign}\label{eq:probable_dec}
        \min_{v\in\mathcal{V}_t} p_\vtheta (y_t = v | \vy_{<t},\vx) 
        \geq 
        \max_{v\in\mathcal{V}\setminus\mathcal{V}_t} p_\vtheta (y_t=v|\vy_{<t},\vx)
    \end{talign}
    for each $t$. Furthermore, for every $v\in\mathcal{V}_t$, $\mathcal{S}$ satisfies
    \begin{equation}\label{eq:peos_dec}
        q_{\mathcal{S}({p_{\vtheta})}} (y_t = v | \vy_{<t},\vx)
        \geq
        p_\vtheta (y_t = v | \vy_{<t},\vx).
    \end{equation} 
\end{definition}

At each $t$, an incomplete probable decoding algorithm $\mathcal{S}$ considers only a set of highly probable tokens, $\mathcal{V}_t$. $\mathcal{S}$ generates  $\hat{\vy}$ given $\vx$ by recursively sampling $\hat{y}_t$ from $q_{\mathcal{S}(p_\vtheta)}(y_t|\hat{\vy}_{<t}, \vx)$ supported on $\mathcal{V}_t$. This reduces an exponential 
complexity of $\mathcal{S}_{map}$,  $\mathcal{O}\left(|\mathcal{V}|^{|\hat{\vy}|}\right)$, down to a linear level, $\mathcal{O}\left(|\hat{\vy}|\cdot|\mathcal{V}|\right)$.

Greedy search, top-$k$ sampling \citep{fan-etal-2018-hierarchical}, and nucleus sampling \citep{Holtzman2020TheCC} are incomplete and probable. 
For example, greedy search $\mathcal{S}_{gr}$ generates the $t$-th item of a sequence by 
\begin{talign}\label{eq:greedy_dec}
    \hat{y}_t=\arg\max_{v\in\mathcal{V}}p_\vtheta(y_t=v|\hat{\vy}_{<t},\vx).
\end{talign}
In other words, $\mathcal{S}_{gr}$ sets $\mathcal{V}_t$ to $\big{\{}v_t^{(1)}\big{\}}$ where $v_t^{(1)}=\arg\max_{v\in\mathcal{V}}p_\vtheta(y_t=v|\hat{\vy}_{<t},\vx)$. Moreover, we have $p_\vtheta\big{(}y_t=v_t^{(1)}\big{|}\hat{\vy}_{<t}, \vx\big{)}\leq q_{\mathcal{S}_{gr}(p_\vtheta)}\big{(}y_t=v_t^{(1)}\big{|}\hat{\vy}_{<t}, \vx\big{)}=1$, and $q_{\mathcal{S}_{gr}(p_\vtheta)}(y_t=v'|\hat{\vy}_{<t}, \vx)=0$ holds for $v'\in\mathcal{V}\setminus\mathcal{V}_t$. Thus, $\mathcal{S}_{gr}$ is incomplete and probable. Unlike $\mathcal{S}_{gr}$, top-$k$ sampling considers $k$ most probable tokens in $\mathcal{V}$ as $\mathcal{V}_t$ while nucleus sampling sets the smallest subset of $\mathcal{V}$, containing most probable tokens of which total probability is higher than a given threshold $\mu$, to $\mathcal{V}_t$. In \S\ref{asubsec:topk} and \ref{asubsec:nucleus}, we present that top-$k$ sampling and nucleus sampling are also incomplete and probable.

Beam search is a heuristic algorithm that operates on the level of prefixes. We describe it further in \S\ref{asubsec:beam}. Although beam search is not an incomplete probable decoding algorithm, it also selects $\mathcal{V}_t$ which is a proper subset of $\mathcal{V}$ to expand each prefix at each step $t$. Due to this, 
our main theoretical finding for the incomplete probable decoding algorithms in \S\ref{sec:nmst} is applicable to beam search as well.

\subsection{Consistency with respect to Incomplete Probable Decoding Algorithms and Self-terminating (ST) Language Models}
\label{subsec:st}

Incomplete probable decoding algorithms greatly reduce computational overhead for generating sequences from our model. However, \citet{welleck-etal-2020-consistency} observed that they can generate non-terminating sequences even if every training sequence has a finite length. 
To study this, \citet{welleck-etal-2020-consistency} defined consistency with respect to decoding algorithms as shown in Definition \ref{def:consist_dec}.

\begin{definition}
\label{def:consist_dec}
    A language model $p_\vtheta$ is consistent with respect to a decoding algorithm $\mathcal{S}$ if  $q_{\mathcal{S}(p_\vtheta)}(|\vy|=\infty)=0$ for any parameter configuration $\vtheta\in\R^k$. 
\end{definition}

\citet{welleck-etal-2020-consistency} also proved that a vanilla language model $p_\vtheta^{va}$ defined in Definition \ref{def:lm} is inconsistent with respect to incomplete probable decoding algorithms and beam search as follows:
\begin{theorem}
\label{thm:vanilla_inconsist}
    A vanilla language model $p^{va}_\vtheta$ defined in Definition \ref{def:lm} is inconsistent with respect to any incomplete probable decoding algorithm and beam search (Theorem 3.4 in \citet{welleck-etal-2020-consistency}). 
\end{theorem}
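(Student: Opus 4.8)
The plan is to reprove the restated result (Welleck et al.'s Theorem~3.4) by exhibiting, for any fixed incomplete probable decoding algorithm $\mathcal{S}$, a single parameter configuration $\vtheta$ under which the induced distribution $q_{\mathcal{S}(p^{va}_\vtheta)}$ puts \emph{all} of its mass on non-terminating sequences, so that $q_{\mathcal{S}(p^{va}_\vtheta)}(|\vy|=\infty)=1>0$; by Definition~\ref{def:consist_dec} this single witness already establishes inconsistency. The construction should make $\eos$ the \emph{strict} least-probable token at every step regardless of the prefix. Concretely, I would choose $\vtheta$ so that the recurrent map collapses to a constant, $f_\vtheta(\cdot,\cdot)\equiv\vh^\star$ for a fixed bounded $\vh^\star$ (e.g.\ by zeroing the input and recurrent weights and routing a bias through the bounded activation), and then pick the embeddings $\{\vu_v\}$ so that $\vu_{\eos}^\top\vh^\star<\vu_v^\top\vh^\star$ for every $v\in\mathcal{V}\setminus\{\eos\}$. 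Since $\vh_t=\vh^\star$ for all $t$, the per-step distribution in \eqref{eq:lm_softmax} is identical at every position and every prefix, and $\eos$ is its unique strict minimizer.

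The key step is then a purely combinatorial consequence of Definition~\ref{def:incom_prob_dec}, which I would argue by contradiction. Suppose $\eos\in\mathcal{V}_t$ at some step and take any $v\neq\eos$. If $v\notin\mathcal{V}_t$, then the probable condition \eqref{eq:probable_dec} gives $p^{va}_\vtheta(\eos\mid\cdots)\ge\min_{w\in\mathcal{V}_t}p^{va}_\vtheta(w\mid\cdots)\ge\max_{w\notin\mathcal{V}_t}p^{va}_\vtheta(w\mid\cdots)\ge p^{va}_\vtheta(v\mid\cdots)$, contradicting that $\eos$ is the strict minimizer. Hence every $v\neq\eos$ must lie in $\mathcal{V}_t$, forcing $\mathcal{V}_t=\mathcal{V}$ and contradicting $\mathcal{V}_t\subsetneq\mathcal{V}$ in \eqref{eq:incomplete_dec}. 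Therefore $\eos\notin\mathcal{V}_t$ at every step, and since \eqref{eq:incomplete_dec} makes $q_{\mathcal{S}(p^{va}_\vtheta)}$ sum to $1$ over $\mathcal{V}_t$, we get $q_{\mathcal{S}(p^{va}_\vtheta)}(y_t=\eos\mid\cdots)=0$ for all $t$. Sampling recursively, every decoded token differs from $\eos$, so the decoded sequence is infinite almost surely and $q_{\mathcal{S}(p^{va}_\vtheta)}(|\vy|=\infty)=1$.

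For beam search I would reuse the same $\vtheta$ and appeal to the description in \S\ref{asubsec:beam}: at each step beam search expands its hypotheses only through a set of most-probable continuations that is again a proper subset respecting the probability ordering, so the exclusion argument above shows $\eos$ never enters any candidate set; consequently no hypothesis is ever completed and all surviving beams are non-terminating. The step I expect to be the main obstacle is verifying that the collapse-to-constant configuration is genuinely realizable inside each architecture covered by Definition~\ref{def:lm} (RNN, LSTM, GRU, and the Transformer variant of the footnote), since the existence of $\vtheta$ must be checked against each specific parametrization and activation; a secondary subtlety is matching the combinatorial exclusion lemma to beam search's hypothesis-level bookkeeping rather than token-level sampling. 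If a fully constant state is awkward for some architecture, the fallback is to invoke boundedness of $\{\vh_t\}$ under bounded activations and choose $\vu_{\eos}$ so that $\eos$ remains the strict minimizer uniformly over the reachable compact set of states.
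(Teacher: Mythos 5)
The paper does not actually prove this statement itself---it is imported verbatim from \citet{welleck-etal-2020-consistency}, and the only in-paper justification is the two-sentence remark following the theorem that an incomplete probable decoding algorithm may exclude $\eos$ from every $\mathcal{V}_t$. Your argument is correct and is essentially that remark made rigorous in the same way as in the cited source: exhibit a witness $\vtheta$ with a constant reachable hidden state $\vh^\star$ and $\vu_{\eos}^\top\vh^\star$ strictly minimal, then observe that a strict minimizer can never belong to a proper subset $\mathcal{V}_t\subsetneq\mathcal{V}$ satisfying \eqref{eq:probable_dec}, so $q_{\mathcal{S}(p^{va}_\vtheta)}(y_t=\eos\mid\cdot)=0$ at every step; the same exclusion applies to the top-$k$ expansion sets \eqref{aeq:beam_vt} of beam search. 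One caution on your fallback: boundedness of $\{\vh_t\}$ alone does not suffice, since a reachable set that is symmetric about the origin admits no $\vu_{\eos}$ with $(\vu_{\eos}-\vu_v)^\top\vh_t<0$ uniformly; but your primary constant-state construction (zero input and recurrent weights, a bias routed through the activation) is realizable in each architecture covered by Definition \ref{def:lm}, so the fallback is never needed.
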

For each $t$, an incomplete probable decoding algorithm $\mathcal{S}$ selects $\mathcal{V}_t\subsetneq\mathcal{V}$ as a set of candidates for decoding, but $p_\vtheta^{va}$ does not guarantee that $\eos\in\mathcal{V}_t$. Specifically, if $\eos\notin\mathcal{V}_t$ for all $t$, then $\mathcal{S}$ cannot decode each token to $\eos$ for all $t$ (i.e., non-terminating). Based on this result, \citet{welleck-etal-2020-consistency} proposed a \emph{self-terminating (ST) language model}, defined below:
\begin{definition}
\label{def:st}
    For $\vh_t$ defined in Definition \ref{def:lm}, the conditional probability of each token $v\in\mathcal{V}$ given a $t$-prefix $\vy_{<t}$ and a context $\vx$ at each time step $t$ in an ST language model is given by
    \begin{talign}\label{eq:st_peos}
        \alpha_t
        =
        p_\vtheta^{st}(y_t=\eos|\vy_{<t},\vx) 
        = 
        1-\prod_{t'=1}^t (1-\epsilon)\cdot\sigma(\vu_{\eos}^\top \vh_{t'}),
    \end{talign}
    and
    \begin{talign*}
        p^{st}_{\vtheta}(y_t=v| \vy_{<t}, \vx)
        =
        (1-\alpha_t)\cdot\exp(\vu_v^\top \vh_t)/{\sum_{v'\in{\mathcal{V}\setminus\{\eos\}}}\exp(\vu_{v'}^\top \vh_t)},
    \end{talign*}
    where $v\in\mathcal{V}\setminus\{\eos\}$, $\epsilon \in (0,1)$, and $\sigma(x)=(1+\exp(-x))^{-1}$ is a sigmoid function.
\end{definition}

They proved that 
the ST language model is consistent with respect to any incomplete probable decoding algorithm and beam search, as follows:
\begin{theorem}
\label{thm:st_consist} 
    An ST language model $p_\vtheta^{st}$ defined in Definition \ref{def:st} is consistent with respect to any incomplete probable decoding algorithms and beam search (Theorem 4.1-4.3 in \citet{welleck-etal-2020-consistency}).
\end{theorem}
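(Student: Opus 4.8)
The plan is to isolate a single, $\vtheta$-independent quantitative fact about the ST parametrization and then feed it into two separate termination arguments, one covering every incomplete probable decoding algorithm and one for beam search.

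First I would establish the key lemma. Since $\sigma(\cdot)<1$ and $\epsilon\in(0,1)$, every factor in \eqref{eq:st_peos} satisfies $(1-\epsilon)\sigma(\vu_\eos^\top\vh_{t'})\leq 1-\epsilon$, so that
\begin{align*}
1-\alpha_t=\prod_{t'=1}^t(1-\epsilon)\sigma(\vu_\eos^\top\vh_{t'})\leq(1-\epsilon)^t.
\end{align*}
Hence $\alpha_t\geq 1-(1-\epsilon)^t\to 1$ at a geometric rate that depends only on $\epsilon$, uniformly over $\vtheta$, the prefix $\vy_{<t}$, and the context $\vx$. Because the non-$\eos$ tokens share the remaining mass, every $v\neq\eos$ also obeys $p_\vtheta^{st}(y_t=v\mid\vy_{<t},\vx)\leq 1-\alpha_t\leq(1-\epsilon)^t$. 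This uniformity is exactly what lets a single stopping time work for all prefixes at once.

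For any incomplete probable decoding algorithm $\mathcal{S}$, I would first show $\eos$ must enter the candidate set $\mathcal{V}_t$ once $\alpha_t>1/2$: if $\eos\notin\mathcal{V}_t$, the probable condition \eqref{eq:probable_dec} forces some $v\in\mathcal{V}_t$ with $p_\vtheta^{st}(y_t=v\mid\cdot)\geq\alpha_t$, contradicting $p_\vtheta^{st}(y_t=v\mid\cdot)\leq 1-\alpha_t<1/2<\alpha_t$. Let $t^\star$ be the finite time (a function of $\epsilon$ only) after which $1-(1-\epsilon)^t>1/2$. For $t\geq t^\star$, condition \eqref{eq:peos_dec} then gives $q_{\mathcal{S}(p_\vtheta)}(y_t=\eos\mid\cdot)\geq\alpha_t\geq 1-(1-\epsilon)^t$. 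Conditioning step by step on survival, the probability of generating a non-terminating sequence is at most $\prod_{t\geq t^\star}(1-\alpha_t)\leq\prod_{t\geq t^\star}(1-\epsilon)^t=0$, so $q_{\mathcal{S}(p_\vtheta)}(|\vy|=\infty)=0$.

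The beam search case needs a different argument, since beam search is deterministic and operates on prefix scores rather than per-step sampling. Here I would compare, at each step $t$, the score increment of extending an active hypothesis by $\eos$, namely $\log\alpha_t\geq\log(1-(1-\epsilon)^t)$, against the increment from any non-$\eos$ token, which is at most $\log((1-\epsilon)^t)=t\log(1-\epsilon)$. The former tends to $0$ while the latter diverges to $-\infty$, so there is a finite $t^{\star\star}$ (again depending only on $\epsilon$) beyond which the $\eos$-extension of the top-scoring active hypothesis outscores every non-$\eos$ extension of every beam hypothesis. The hard part will be the bookkeeping: I must show the beam is emptied of active hypotheses in finite time, and the subtlety is that the score gaps among the $B$ hypotheses can grow. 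I would handle this by noting that each active hypothesis surviving past $t^{\star\star}$ loses at least $|t\log(1-\epsilon)|$ per step while finished hypotheses retain frozen scores bounded below by a constant; hence within finitely many steps every active score falls below the best $B$ finished scores and the beam terminates. Making this eviction argument precise across hypotheses of differing lengths is the most delicate step.
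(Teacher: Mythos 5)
Your proposal is essentially correct, but note that the paper does not actually prove this statement itself: Theorem~\ref{thm:st_consist} is imported from \citet{welleck-etal-2020-consistency} (their Theorems 4.1--4.3), and the only in-paper proof with which to compare is the analogous argument for the NMST model in \S\ref{asec:nmst_thm_proof}. Your treatment of incomplete probable decoding algorithms follows that template almost exactly: from \eqref{eq:st_peos} you get the uniform bound $1-\alpha_t\leq(1-\epsilon)^t$, hence a threshold time (depending only on $\epsilon$) after which $\alpha_t>\tfrac{1}{2}$, the contradiction via \eqref{eq:probable_dec} showing $\eos\in\mathcal{V}_t$, the lower bound $q_{\mathcal{S}(p_\vtheta)}(y_t=\eos\mid\cdot)\geq\alpha_t$ from \eqref{eq:peos_dec}, and a vanishing infinite product. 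All of these steps are sound; your geometric rate is in fact slightly more quantitative than the paper's ``$<\prod 1/2$'' bound, but buys nothing extra for consistency.

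The one place you overcomplicate things is beam search. Your key inequality is right --- once $(1-\epsilon)^t<\tfrac{1}{2}$, the $\eos$-increment $\log\alpha_t$ strictly exceeds every non-$\eos$ increment $\log p_\vtheta^{st}(v\mid\vrho,\vx)\leq t\log(1-\epsilon)$, so the $\eos$-extension of the top-scoring active prefix beats every non-$\eos$ extension of every prefix in the beam. But the ``eviction'' bookkeeping you anticipate as the delicate step is unnecessary under the beam-search definition used here (Definition~\ref{adef:beam}): hypotheses ending in $\eos$ are removed from the active beam and accumulated in the final set $\mathcal{P}$, so finished hypotheses never compete with, and need never crowd out, active ones. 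The paper's route (for NMST) is simply to observe that after the threshold time the $\eos$-extension of an active prefix $\vrho$ dominates \emph{all} longer continuations of $\vrho$, since any such continuation has probability at most $p_\vtheta^{st}(z_1\mid\vrho,\vx)<p_\vtheta^{st}(\eos\mid\vrho,\vx)$; hence at least one $\eos$-terminated sequence enters $\mathcal{P}$ at each subsequent step, and the search halts within $k$ additional steps, bounding every returned sequence's length by a finite constant. Replacing your eviction argument with this one-finished-hypothesis-per-step count would make the beam-search half both shorter and airtight.
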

In \eqref{eq:st_peos}, $p_\vtheta^{st}(y_t=\eos|\vy_{<t},\vx)$  monotonically increases to 1 as $t$ increases. $\mathcal{S}$ ends up including $\eos$ in $\mathcal{V}_t$ always for $t\geq t'$ with some $t'$, and 
$\lim_{t\rightarrow\infty}q_{\mathcal{S}(p_\vtheta)}(y_t=\eos|\vy_{<t},\vx)=1$ by \eqref{eq:peos_dec}. This guarantees $\mathcal{S}$ to terminate in a finite number of steps.
Despite $p_\vtheta^{st}$'s consistency, its validation perplexity degrades compared to 
$p_\vtheta^{va}$ in sequence completion \citep{welleck-etal-2020-consistency}. 
We suspect that such degradation 
comes from 
the core property of $p_\vtheta^{st}$ that  
$p_\vtheta(y_t=\eos|\vy_{<t}, \vx)$ \emph{monotonically} increases to $1$ as $t$ increases. In Remark \ref{rem:exist_nmpeos} below, we provide an example where 
the optimal $p_{\vtheta^\star}(y_t=\eos|\vy_{<t}, \vx)$
is not monotonic. 

\begin{remark}
\label{rem:exist_nmpeos}
    Let $\mathcal{D} = \left\{(\vx^{(1)}, \vy^{(1)}), (\vx^{(2)}, \vy^{(2)})\right\}$ be a two-instance training dataset. Assume that there exists $t_0$ such that $\vy_{<t_0}=\vy_{<t_0}^{(1)}=\vy_{<t_0}^{(2)}$. Suppose further that $t_0=|\vy^{(1)}|<|\vy^{(2)}|-1$ and $\vx=\vx^{(1)}=\vx^{(2)}$. If $\vtheta^\star$ is an optimal parameter configuration in \eqref{eq:mle_loss} over $\mathcal{D}$. Then, $p_{\vtheta^\star}\big{(}y^{(2)}_t=\eos| \vy^{(2)}_{<t},\vx\big{)}$ is not monotonic with respect to $t$ (proved in \S\ref{asec:st_remark_proof}).
\end{remark}
We can easily find such case in natural language satisfying the assumptions in Remark \ref{rem:exist_nmpeos} by concatenating two sequences. We empirically demonstrate the existence of such cases 
in \S\ref{subsubsec:wiki103}. 

\section{Non-monotonic Self-terminating (NMST) Language Models}
\label{sec:nmst}
 
The consistency of $p_\vtheta^{st}$ comes from $\lim_{t\rightarrow\infty}p_\vtheta^{st}(y_t=\eos|\vy_{<t},\vx)=1$, not the monotonically increasing $p_\vtheta^{st}(y_t=\eos|\vy_{<t},\vx)$ as a function of $t$. This motivates us to propose a \emph{non-monotonic self-terminating (NMST) language model} $p_\vtheta^{nmst}$ that permits $p_\vtheta^{nmst}(y_t=\eos|\vy_{<t},\vx)$ to be a non-monotonic function of $t$ while satisfying $\lim_{t\rightarrow\infty}p_\vtheta^{nmst}(y_t=\eos|\vy_{<t},\vx)=1$ as follows:






\begin{definition}
\label{def:nmst}
    For $\vh_t$ defined in Definition \ref{def:lm}, the conditional probability of each token given a $t$-prefix $\vy_{<t}$ and a context $\vx$ at the $t$-th step in an NMST language model is defined by
    \begin{talign}\label{eq:nmst_peos}
        \alpha_t
        =
        p^{nmst}_\vtheta(y_t=\eos|\vy_{<t},\vx) 
        = 
        \big{(}1-\sigma\big{(}\vu_{\eos}^\top \vh_t\big{)}\big{)}(1-(1-\epsilon)^t) + \sigma\big{(}\vu_{\eos}^\top \vh_t\big{)},
    \end{talign}
    and
    \begin{talign*}
        p^{nmst}_{\vtheta}(y_t=v|\vy_{<t}, \vx)
        =
        (1-\alpha_t)\cdot\exp(\vu_v^\top\vh_t)/{\sum_{v'\in{\mathcal{V}\setminus\{\eos\}}}\exp(\vu_{v'}^\top \vh_t)},
    \end{talign*}
    where $v\in\mathcal{V}\setminus\{\eos\}$, $\epsilon \in (0,1)$, and $\sigma(x)=(1+\exp(-x))^{-1}$ is a sigmoid function.
\end{definition}


\begin{figure}[t]
\centerline{\includegraphics[trim=0.3cm 1.0cm 0.3cm 0.2cm, clip,width=0.66\linewidth]{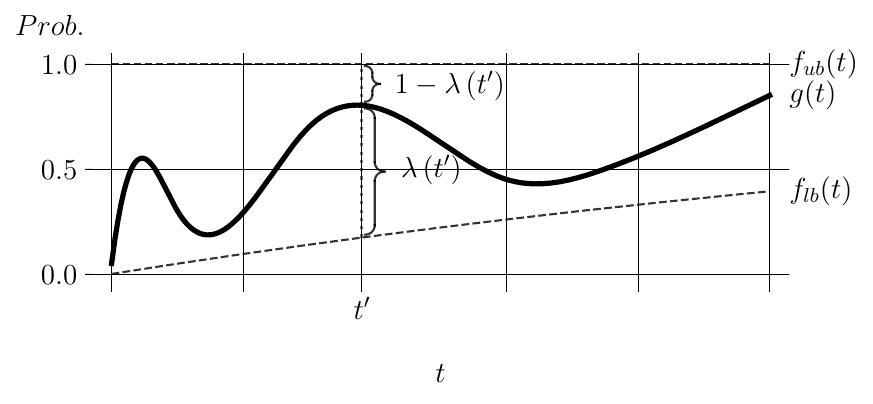}}
\vskip -15pt
\caption{An illustration of NMST parametrization in \eqref{eq:nmst_peos} where $f_{lb}(t)=1-(1-\epsilon)^t$, $f_{ub}(t)=1$, $\lambda(t')=\sigma\big{(}\vu_{\eos}^\top\vh_{t'}\big{)}$, and $g(t)=p^{nmst}_\vtheta(y_t=\eos|\vy_{<t},\vx)$. If $g(t)$ lies between $f_{lb}(t)$ and $f_{ub}(t)$, we can find $\lambda(t')$ such that $g(t')=(1-\lambda(t'))f_{lb}(t')+\lambda(t')f_{ub}(t')$ for any $t'$ regardless of whether $g(t)$ is monotonic with respect to $t$. This allows $p^{nmst}_\vtheta$ to learn a non-monotonic behavior of $p^{nmst}_\vtheta(y_t=\eos|\vy_{<t},\vx)$. $p^{nmst}_\vtheta$ is consistent with respect to any incomplete probable decoding algorithms and beam search due to $\lim_{t\rightarrow\infty}f_{lb}(t)=1\Rightarrow \lim_{t\rightarrow\infty}p^{nmst}_\vtheta(y_t=\eos|\vy_{<t},\vx)=1$.}
\label{fig:convex_combi}
\end{figure}

Figure \ref{fig:convex_combi} shows that  $p_\vtheta^{nmst}$ uses 
convex combination of two curves to model  
$p_\vtheta^{nmst}(y_t=\eos|\vy_{<t},\vx)$. We can write a curve $g(t)$ between a lower-bound curve $f_{lb}(t)$ and an upper-bound curve $f_{ub}(t)$ as 
$g(t)=(1 - \lambda(t)) f_{lb}(t) + \lambda(t) f_{ub}(t)$,
with appropriate $\lambda(t)\in (0,1)$ for all $t$. $p_\vtheta^{nmst}$ sets $g(t)$ to $p_\vtheta^{nmst}(y_t=\eos|\vy_{<t},\vx)$, and then regards it as a convex combination of $f_{lb}(t)=1-(1-\epsilon)^t$ and $f_{ub}(t)=1$ with a coefficient $\lambda(t)=\sigma\big{(}\vu_{\eos}^\top \vh_t\big{)}$. This enables 
non-monotonic $p_\vtheta^{nmst}(y_t=\eos|\vy_{<t},\vx)$. Moreover, in Theorem \ref{thm:nmst_consist} below, we show that the proposed NMST parametrization in \eqref{eq:nmst_peos} still guarantees the consistency with respect to any incomplete probable decoding algorithms and beam search.
\begin{theorem}
\label{thm:nmst_consist}
    An NMST language model defined in Definition \ref{def:nmst} is consistent with respect to any incomplete probable decoding algorithms and beam search.\footnote{We provide the proof in \S\ref{asec:nmst_thm_proof}.}
\end{theorem}
Theorem \ref{thm:nmst_consist} guarantees that every decoded sequence from $p_\vtheta^{nmst}$ terminates when using incomplete decoding algorithms and beam search. Neither $p_\vtheta^{nmst}$ nor $p_\vtheta^{st}$ results in non-terminating sequences resulting from incomplete probable decoding algorithms and beam search. Unlike ST parametrization, 
our NMST parametrization in \eqref{eq:nmst_peos} can capture a wider range of $p_\vtheta(y_t=\eos|\vy_{<t},\vx)$, since $p_\vtheta^{nmst}$ does not assume that $p_\vtheta(y_t=\eos|\vy_{<t},\vx)$ is a monotonic function of $t$. We empirically demonstrate this by comparing $p_\vtheta^{va}(y_t=\eos|\vy_{<t},\vx)$, $p_\vtheta^{st}(y_t=\eos|\vy_{<t},\vx)$, and $p_\vtheta^{nmst}(y_t=\eos|\vy_{<t},\vx)$ in Figure \ref{fig:wiki103_gpt_peos}.


\section{Experiments}
\label{sec:exp}

We empirically validate the effectiveness of the proposed non-monotonic self-terminating (NMST) language model by evaluating it on 
sequence completion tasks. 
We test three variants of a given architecture: (i) a vanilla (\textrm{VA+}) language model using common softmax parametrization in \eqref{eq:lm_softmax}, (ii) a self-terminating (\textrm{ST+}) language model using ST parametrization proposed by \citet{welleck-etal-2020-consistency} 
and (iii) our non-monotonic self-terminating (\textrm{NMST+}) language model using NMST parametrization in \eqref{eq:nmst_peos}. We use following evaluation metrics for comparison:
\begin{itemize}
    \itemsep 0em
    \item {\bf Perplexity}: Given an autoregressive language model $p_\vtheta$, the perplexity of $p_\vtheta$ over $\mathcal{D}$ is $\exp\bigg{(}-\frac{1}{N}\sum_{n=1}^{N} \sum_{t=1}^{T^{(n)}}\log p_\vtheta\left(\vy_t^{(n)}\middle|\vy_{<t}^{(n)},\vx^{(n)}\right)\bigg{)}$,
    where $\mathcal{D} = \left\{\left(\vx^{(n)},\vy^{(n)}\right)\right\}_{n=1}^N$.
    
    \item {\bf Non-termination ratio ($r_{nt}$)}: To present the consistency of $p_\vtheta$ with respect to a given decoding algorithm $\mathcal{S}$, we need to compute $r_{nt}=q_{\mathcal{S}(p_\vtheta)}\left( |\vy| = \infty\right)$. Instead, based on
    \begin{align}\label{eq:r_nt}
        r_{nt}
        =
        q_{\mathcal{S}(p_\vtheta)}\left( |\vy| = \infty\right)
        =
        \lim_{L\rightarrow \infty} q_{\mathcal{S}(p_\vtheta)}\left( |\vy| > L\right),
    \end{align}
    we use $r_{nt}(L) = q_{\mathcal{S}(p_\vtheta)}\left( |\vy| > L\right)$ with a sufficiently large threshold $L$ to estimate $r_{nt}$.
\end{itemize}

Sequence completion is a task of predicting a continuation $\hat{\vy}$ given a $c$-length context $\vx=(x_1,x_2,\cdots,x_c)$ by using a decoding algorithm $\mathcal{S}$ from a language model $p_\vtheta$ (i.e. $\hat{\vy}\sim q_{\mathcal{S}(p_\vtheta)}(\vy|\vx)$). In this section, we use greedy search defined in \eqref{eq:greedy_dec} to generate $\hat{\vy}$ given $\vx$. Our main theoretical finding in Theorem \ref{thm:nmst_consist} is that the proposed NMST language model is consistent with respect to not only greedy search but also top-$k$ sampling, nucleus sampling, and beam search. 
We thus present results when using decoding algorithms other than greedy search at the end in \S\ref{sec:exp_other_dec} and \S\ref{asec:exp_other_dec_recurrent}.


\subsection{WikiText-2}
\label{subsubsec:wiki2}

WikiText-2 \citep{merity2016pointer} 
consists of 2 million words from 600 Wikipedia articles. With word tokenization, we regard the first 10 tokens of each sequence and its remaining part, as a context $\vx$ and a ground truth $\vy$, respectively.
We train \textrm{RNN} with 
$\tanh$  \citep{elman1990finding} and \textrm{LSTM} \citep{hochreiter1997long} on WikiText-2. Both \textrm{RNN} and \textrm{LSTM} have 2 layers, with 256 and 512 hidden units at each layer, respectively.
We perform 10 random runs with a batch size of 32 for 70 epochs. We use AdamW \citep{loshchilov2017decoupled} with an initial learning rate of $0.001$, $\beta_1=0.9$, $\beta_2=0.99$, weight decay of $0.01$, learning rate decay, and early stopping. We further describe our models and training strategies for WikiText-2 experiments in \S\ref{asec:exp_detail}. Unlike \textrm{VA+\{RNN,~LSTM\}}, \textrm{ST+\{RNN, LSTM\}} and \textrm{NMST+\{RNN, LSTM\}} need an additional hyperparameter $\epsilon$. We explore $\epsilon$ in $\{1.0\times 10^{-5},5.0\times 10^{-5},1.0\times 10^{-4},5.0\times 10^{-4}\}$.

We present the average ($\pm$st.dev.) non-termination ratios, $r_{nt}(L)$'s, across 10 random runs as a function of $L$ for all considered setups of WikiText-2 in Figure \ref{fig:wiki2_rnt}, using greedy search. From \eqref{eq:r_nt}, a language model is consistent with respect to greedy search if $\lim_{L\rightarrow \infty}r_{nt}(L)=0$. As $L$ increases, we observe that $r_{nt}(L)$'s of \textrm{VA+\{RNN, LSTM\}} fail to converge toward 0 while $r_{nt}(L)$'s of \textrm{ST+\{RNN, LSTM\}} and \textrm{NMST+\{RNN, LSTM\}} all reach 0. 
In other words, \textrm{RNN} and \textrm{LSTM} are now consistent with respect to greedy search after replacing the original softmax parametrization with either the proposed NMST parametrization or ST parametrization. 

\begin{figure}[t]
    \begin{center}
     \subfigure[\textrm{\textrm{RNN}}]{\includegraphics[trim=0cm 0.3cm 0cm 0.4cm, clip,width=0.44\linewidth]{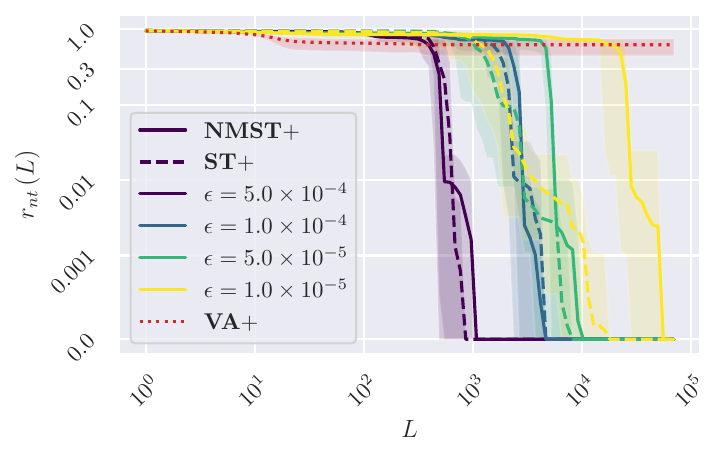}}
	 \subfigure[\textrm{LSTM}]{\includegraphics[trim=0cm 0.3cm 0cm 0.4cm, clip, width=0.44\linewidth]{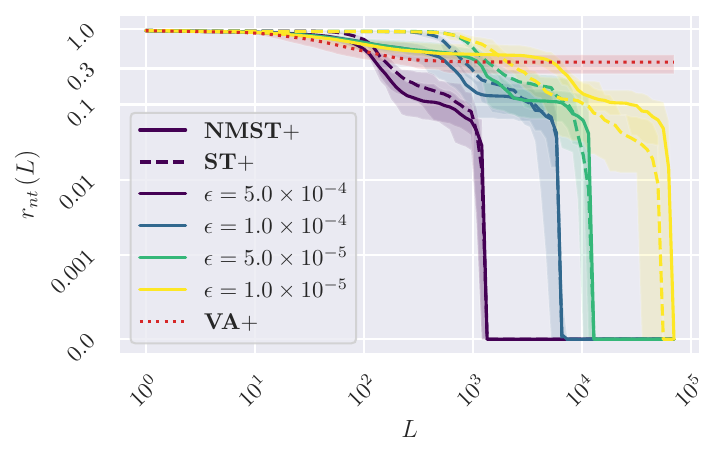}}
	\end{center}
	\vskip -15pt
	\caption{Non-termination ratios, $r_{nt}(L)$'s, as a function of $L$ in log-log scale for (a) \textrm{RNN} and (b) \textrm{LSTM} trained on WikiText-2 when using greedy search. We report mean (curve) $\pm$ st.dev. (shaded area) across 10 random experiments. For all configurations, both \textrm{ST+} (non-red dashed) proposed by \citet{welleck-etal-2020-consistency} and our \textrm{NMST+} (non-red solid) are consistent with respect to greedy search since $r_{nt}(L)$ goes to 0 as $L$ increases. However, softmax parametrization (\textrm{VA+}, red dotted) is inconsistent with respect to greedy search since its $r_{nt}(L)$ does not converge to 0 as $L\rightarrow\infty$.}\label{fig:wiki2_rnt}
\end{figure}

\begin{table}[t]
\caption{Mean ($\pm$st.dev.) validation perplexities across 10 random runs on WikiText-2 for various model configurations. Lower is better. {\bf Bold} marks the best of each architecture. For all $\epsilon$, the validation perplexities of our \textrm{NMST+\{RNN, LSTM\}} are better than those of \textrm{ST+\{RNN, LSTM\}} proposed by \citet{welleck-etal-2020-consistency}. Moreover, with a proper choice of $\epsilon=1.0\times 10^{-5}$, \textrm{NMST+\{RNN, LSTM\}} have competitive validation perplexities against those of \textrm{VA+\{RNN, LSTM\}}.}
\label{tab:wiki2_ppl}
\begin{center}
\begin{tabular}{c|*2c|*2c}
\toprule
&\multicolumn{2}{c|}{\textrm{RNN}} & \multicolumn{2}{c}{\textrm{LSTM}} \\
\cmidrule{2-3}\cmidrule{4-5}
$\epsilon$ & \textrm{ST+} & \textrm{NMST+} & \textrm{ST+} & \textrm{NMST+} \\
\midrule
$5.0\times 10^{-4}$ & $186.1\pm(6.2)$ & $184.2\pm(6.5)$ & $106.1\pm(1.0)$ & $105.6\pm(1.2)$ \\
$1.0\times 10^{-4}$ & $181.0\pm(3.8)$ & $177.4\pm(7.0)$ & $104.6\pm(1.4)$ & $102.5\pm(1.0)$ \\
$5.0\times 10^{-5}$ & $182.6\pm(8.0)$ & $179.6\pm(5.7)$ & $104.7\pm(1.6)$ & $102.1\pm(1.0)$ \\
$1.0\times 10^{-5}$ & $180.4\pm(3.3)$ & $\mathbf{177.4\pm(4.5)}$ & $104.5\pm(1.4)$ & $\mathbf{101.5\pm(0.8)}$ \\
\midrule
\textrm{VA+} & \multicolumn{2}{c|}{$178.6\pm(6.3)$} & \multicolumn{2}{c}{$101.6\pm(1.0)$}\\
\bottomrule
\end{tabular}
\end{center}
\vskip -15pt
\end{table}

Table \ref{tab:wiki2_ppl} shows that the average ($\pm$st.dev.) validation perplexities across 10 random experiments for all variants of RNN and LSTM, trained on WikiText-2. We observe that \textrm{NMST+\{RNN, LSTM\}} have better validation perplexities than \textrm{ST+\{RNN, LSTM\}} for every $\epsilon$. We demonstrate this more clearly in \S\ref{asubsec:wiki2_eps_vs_ppl} 
by plotting the evolution of the mean validation perplexities as we vary $\epsilon$. Although our \textrm{NMST+} guarantees the consistency of \textrm{RNN} and \textrm{LSTM} with respect to greedy search with a better validation perplexity than \textrm{ST+}, we need to carefully select $\epsilon$ of \textrm{NMST+}. As $\epsilon$ increases, the lower bound of $p_\vtheta^{nmst}(y_t=\eos|\vy_{<t},\vx)$ grows faster, yielding premature sequences when $\epsilon$ is too large. Indeed, the average validation perplexities of \textrm{NMST+RNN} and \textrm{NMST+LSTM} with $\epsilon=5.0\times 10^{-4}$ are 184.2 and 105.6 which degrade by 5.6 and 4.0 from those of \textrm{VA+RNN} and \textrm{VA+LSTM}, 178.6 and 101.6, respectively. We however emphasize that there is the optimal $\epsilon=1.0\times 10^{-5}$ that makes \textrm{NMST+\{RNN, LSTM\}} have the validation perplexities similar to those of \textrm{VA+\{RNN, LSTM\}}.
In short, both \textrm{NMST+} and \textrm{ST+} prevent non-termination when using greedy search but only \textrm{NMST+} has a competitive validation perplexity against \textrm{VA+}. 
In \S\ref{asec:wiki2_len_dist}, we further observe that the length distribution of predicted sequences from NMST+LSTM is closer to the length distribution of ground truth sequences than those of predicted sequences from \{VA, ST\}+LSTM.

\subsection{WikiText-103}\label{subsubsec:wiki103}

WikiText-103 \citep{merity2016pointer} consists of 103 million words constructed from 28,000 articles. We use BPE tokenization \citep{sennrich2015neural} and consider the first 10 tokens as a context for each sequence. Since WikiText-103 is substantially larger than WikiText-2, we finetune a pretrained \textrm{GPT-2} which is a transformer language model with 124 million parameters \citep{radford2019language} for $500,000$ steps. 
For computational efficiency, we bucket the dataset into sequences of similar lengths, and each batch contains a maximum of 1,024 total tokens.
We use AdamW \citep{loshchilov2017decoupled} with an initial learning rate of $5.0\times 10^{-5}$, $\beta_1=0.9$, $\beta_2=0.99$, weight decay of 0.01, linear learning rate decay, and early stopping. We present a more detailed description in \S\ref{asec:exp_detail}. We select $\epsilon$ from $\{1.0\times 10^{-5},5.0\times 10^{-5},1.0\times 10^{-4},5.0\times 10^{-4}\}$ for $\textrm{ST+GPT-2}$ and $\textrm{NMST+GPT-2}$.  
  

\begin{table}[t]
\caption{We present the average ($\pm$st.dev.) validation perplexities across 10 random runs for all variants of \textrm{GPT-2} finetuned on WikiText-103. We also demonstrate their non-termination ratios (mean$\pm$st.dev.), $r_{nt}(L)$'s, when using greedy search. We set $L$ to 1,000 since the maximum length of generated sequences from \textrm{GPT-2} is 1,024. For perplexity, lower is better. {\bf Bold} marks the best validation perplexity in all setups. For every $\epsilon$, \textrm{NMST+GPT-2} outperforms \textrm{ST+GPT-2} in terms of the average validation perplexity. From $r_{nt}(L)$, \textrm{NMST+GPT-2} effectively prevents non-termination sequences compared to \textrm{VA+GPT-2} for every $\epsilon$ while \textrm{ST+GPT-2} with small $\epsilon$ fails to avoid them. With a proper choice of $\epsilon$ (e.g., $\epsilon=1.0\times10^{-5}$), \textrm{NMST+GPT-2} improves the validation perplexity.}
\label{tab:wiki103_ppl_rnt}
\centering
\begin{tabular}{c|cc|cc}
\toprule
&\multicolumn{2}{c|}{Perplexity}&\multicolumn{2}{c}{$r_{nt}(L)$}\\
\midrule
$\epsilon$ & \textrm{ST+} & \textrm{NMST+} & \textrm{ST+} & \textrm{NMST+}\\
\midrule
$5.0\times10^{-4}$&$21.80\pm(0.02)$&$21.63\pm(0.02)$&$0.05\pm(0.03)$&$0.07\pm(0.03)$\\
$1.0\times10^{-4}$&$21.21\pm(0.02)$&$20.86\pm(0.02)$&$0.72\pm(0.11)$&$0.22\pm(0.10)$\\
$5.0\times10^{-5}$&$21.19\pm(0.03)$&$20.76\pm(0.02)$&$0.72\pm(0.11)$&$0.24\pm(0.10)$\\
$1.0\times10^{-5}$&$21.16\pm(0.03)$&$\mathbf{20.69\pm(0.03)}$&$0.75\pm(0.10)$&$0.23\pm(0.10)$\\
\midrule
\textrm{VA+} &\multicolumn{2}{c|}{$20.72\pm(0.03)$}&\multicolumn{2}{c}{$0.27\pm(0.08)$}\\
\bottomrule
\end{tabular}
\end{table}

We report the mean ($\pm$st.dev.) validation perplexities and non-termination ratios, $r_{nt}(L)$'s, resulting from greedy search across 10 random runs for all \textrm{GPT-2} setups finetuned on WikiText-103 in Table \ref{tab:wiki103_ppl_rnt}. Since \textrm{GPT-2} can handle up to 1,024 tokens, we use $L=\,$1,000. As shown in Figure \ref{fig:wiki2_rnt}, we need a sufficiently large $L$ such as $L=10^5$ to determine whether a language model is consistent with respect to greedy search. Although $L=\,$1,000 is not sufficiently large, we observe that $r_{nt}(L)$ of \textrm{NMST+GPT-2} decreases compared to $r_{nt}(L)$ of \textrm{VA+GPT-2} as $\epsilon$ increases. That is, $\textrm{NMST+}$ reduces the number of non-terminating continuations within 1,000 steps. 
Non-terminating sequences do not necessarily imply 
better quality. We thus demonstrate sample continuations from \textrm{NMST+GPT-2}, given a context that leads non-termination with \textrm{VA+GPT-2} in Table \ref{tab:wiki103_ex_seq}, 
using greedy search. 
We observe that the quality of the generated sequence tends to improve with $\textrm{NMST+}$ by avoiding repetitions of similar phrases and ending with $\eos$. We present more example continuations in \S\ref{asubsec:wiki103_ex_seq}.

Similar to the results in \S\ref{subsubsec:wiki2}, Table \ref{tab:wiki103_ppl_rnt} shows that the validation perplexities of both \textrm{ST+GPT-2} proposed by \citet{welleck-etal-2020-consistency} and our \textrm{NMST+GPT-2} degrade compared to \textrm{VA+GPT-2} as $\epsilon$ increases. \textrm{NMST+GPT-2} with the optimal $\epsilon=1.0\times 10^{-5}$ has a competitive validation perplexity of 20.69 against that of \textrm{VA+GPT-2}, 20.72. On the other side, we cannot find $\epsilon$ that makes the validation perplexity of \textrm{ST+GPT-2} competitive against that of \textrm{VA+GPT-2}. Moreover, 
if $\epsilon\neq 5.0\times10^{-4}$, then 
$r_{nt}(L)$'s of \textrm{ST+GPT-2} blow up unlike $r_{nt}(L)$'s of \textrm{VA+GPT-2}. 
\S\ref{asubsec:wiki103_eps_vs_ppl} demonstrates the inevitable perplexity degradation and exploding $r_{nt}(L)$ of \textrm{ST+GPT-2}.
We suspect that it is due to monotonically increasing $p_{\vtheta}(y_t=\eos|\vy_{<t}, \vx)$ with respect to $t$.

\begin{table}[t]
\caption{Given a context in a validation instance of WikiText-103, we present example continuations of \textrm{\{VA, ST, NMST\}+GPT-2} when using greedy search. We select $\epsilon=1.0\times 10^{-5}$ for \textrm{\{ST, NMST\}+GPT-2} because it is optimal in terms of validation perplexities in Table \ref{tab:wiki103_ppl_rnt}. Unlike \textrm{\{VA, ST\}+GPT-2}, \textrm{NMST+GPT-2} improves the quality of the sequence by avoiding repetitive tokens and ending with $\eos$ when the given context leads \textrm{VA+GPT-2} to non-terminate within $1,000$ steps.}\label{tab:wiki103_ex_seq}
\begin{center}
\begin{tabular}{p{0.09\linewidth} | p{0.84\linewidth}}
    \toprule
     Context &  \textit{Made of concrete, steel, and wood, the}\\
     \midrule
     \textrm{VA+} & building was built in the mid @-@ 19th century. It was the first building in the United States to be built in concrete, and the first to be built in wood. It was also the first building in the United States to be built in steel. It was the first building in ...\\
     \midrule
     \textrm{ST+} &  building is constructed of steel and concrete. The building's exterior is made of steel and concrete. The building's interior is made of wood, and the building's exterior is made of concrete. The building's exterior is made of concrete, and the building's ...\\
     \midrule
     \textrm{NMST+} &  building was designed by the architectural firm of Bowers \& Wainwright, and was completed in 1892. The building is the largest of its kind in the United States. $\eos$ \\
     \bottomrule
     \end{tabular}
\end{center}
\vskip -10pt
\end{table}

We 
investigate behaviors of $p_\vtheta(y_t=\eos|\vy_{<t},\vx)$ where $p_\vtheta$'s are \textrm{\{VA, ST, NMST\}+GPT-2} in Figure \ref{fig:wiki103_gpt_peos}. Based on Table \ref{tab:wiki103_ppl_rnt}, we select the optimal $\epsilon=1.0\times10^{-5}$ in terms of validation perplexities for \textrm{\{ST, NMST\}+GPT-2}. In Figure \ref{fig:wiki103_gpt_peos}, \textrm{\{VA, NMST\}+GPT-2} well-capture whether a sequence might end (e.g., after periods) by showing non-monotonic behaviors at those seemingly-terminating steps, but \textrm{ST+GPT-2} cannot model such non-monotonic behaviors because it assumes that $p_\vtheta(y_t=\eos|\vy_{<t},\vx)$ is a monotonic function of $t$. This constraint makes 
\textrm{ST+GPT-2} generate often finite but unnecessarily long sequences with greedy search (i.e., higher $r_{nt}(L)$ than \textrm{VA+GPT-2} for small $L$, but $r_{nt}(L)=0$ for sufficiently large $L$). 
We demonstrate more behaviors in \S\ref{asubsec:wiki103_gpt_peos}.

\begin{figure}[t]
\centerline{\includegraphics[trim=0.1cm 0cm 0.5cm 0.2cm, clip,width=0.85\linewidth]{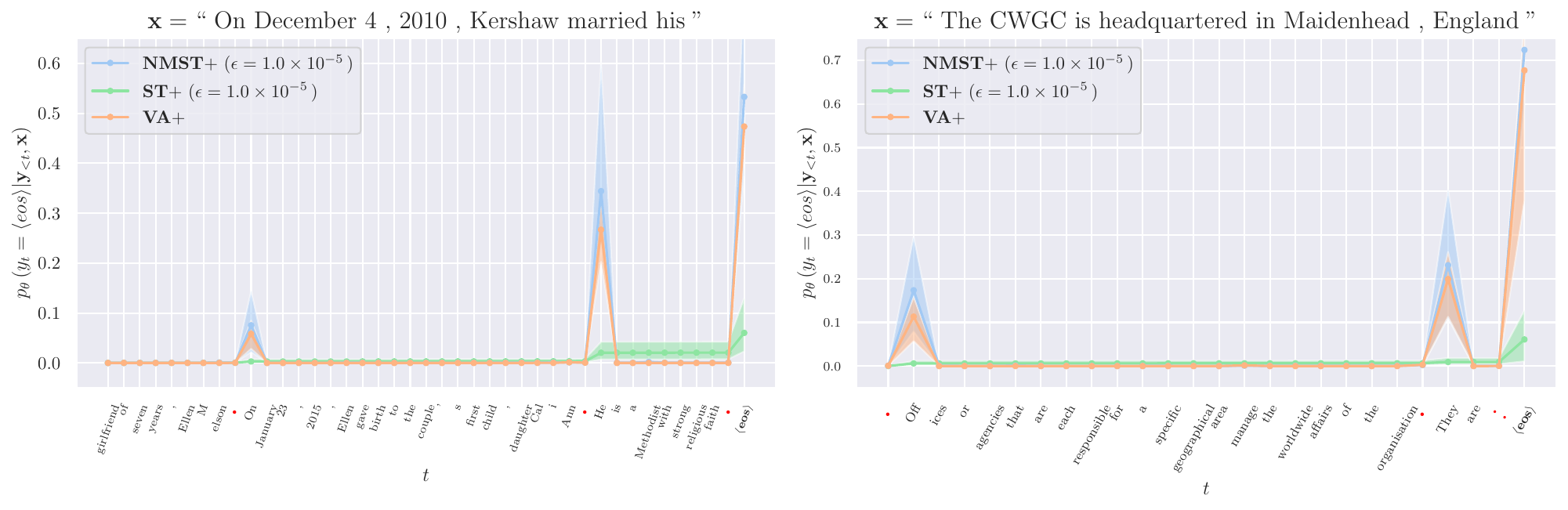}}
\vskip -15pt
\caption{We present $p_\theta(y_t=\eos|\vy_{<t},\vx)$ as a function of $t$ for validation instances of WikiText-103 where $p_\vtheta$'s are \textrm{\{VA, ST, NMST\}+GPT-2}. For \textrm{\{ST, NMST\}+GPT-2}, we choose $\epsilon=1.0\times 10^{-5}$ because it is optimal in terms of validation perplexities in Table \ref{tab:wiki103_ppl_rnt}. Instead of $t$, we tag the $t$-th ground truth token. We report their mean (curve) $\pm$ st.dev. (shaded area) across 10 random runs. Unlike \textrm{ST+GPT-2}, \textrm{NMST+GPT-2} can model non-monotonic behaviors of $p_\theta(y_t=\eos|\vy_{<t},\vx)$ with respect to $t$. Both plots show that the non-monotonic behaviors occur where the sequences could end (e.g., after red marked tokens such as periods).} 
\label{fig:wiki103_gpt_peos}
\end{figure}

\section{Consistency with respect to Other Decoding Algorithms}
\label{sec:exp_other_dec}

We explore the effectiveness of our proposed non-monotonic self-terminating (NMST) language model when using decoding algorithms other than greedy search, such as top-$k$ sampling \citep{fan-etal-2018-hierarchical}, nucleus sampling \citep{Holtzman2020TheCC}, and beam search. All experimental setups and notations are the same as Section \S\ref{sec:exp}. According to Theorem \ref{thm:nmst_consist}, the NMST language model is consistent with respect to any incomplete decoding algorithms (e.g., greedy search, top-$k$ sampling, and nucleus sampling) and beam search for all $\epsilon>0$. To validate this, we use top-\{2, 4\} sampling, nucleus-\{0.2, 0.4\} sampling, and beam search with a width of \{2, 4\} (beam-\{2, 4\}) to generate sequences from \textrm{NMST+GPT-2} finetuned on WikiText-103 with $\epsilon=1.0\times10^{-5}$. The choice of $\epsilon=1.0\times10^{-5}$ is made based on the validation perplexities in Table \ref{tab:wiki103_ppl_rnt}. Since the validation perplexity does not depend on decoding algorithms, we 
focus on the average ($\pm$st.dev.) non-termination ratios, $r_{nt}(L)$'s, across 10 random runs with $L=1,000$ for each decoding algorithm in Table \ref{tab:wiki103_other_dec}. We also present $r_{nt}(L)$'s of \textrm{VA+GPT-2} and \textrm{ST+GPT-2} with $\epsilon=1.0\times10^{-5}$ as baselines. 

\begin{table}[t]
\caption{Mean ($\pm$st.dev.) non-termination ratios, $r_{nt}(L)$'s, across 10 random runs for the variants of \textrm{GPT-2} finetuned on WikiText-103 with various decoding algorithms. We set $L$ to 1,000 due to \textrm{GPT-2}'s context window size  
of 1,024. We use the optimal $\epsilon=1.0\times10^{-5}$ in terms of average validation perplexities in Table \ref{tab:wiki103_ppl_rnt} for both \textrm{NMST+GPT-2} and \textrm{ST+GPT-2}. {\bf Bold} marks the lowest $r_{nt}(L)$ within each decoding algorithm (column). Similar to greedy search in Table \ref{tab:wiki103_ppl_rnt}, for all decoding algorithms, $r_{nt}(L)$'s of \textrm{NMST+GPT-2} are lower than those of \textrm{ST+GPT-2} and \textrm{VA+GPT-2}. It means that \textrm{NMST+} reduce the number of non-terminating sequences within 1,000 decoding steps.} 
\label{tab:wiki103_other_dec}
\begin{center}
\setlength\tabcolsep{2.9pt}
\begin{tabular}{c|cc|cc|cc}
\toprule
& top-2 & top-4 & nucleus-0.2 & nucleus-0.4 & beam-2 & beam-4\\
\midrule
\textrm{VA+}& $0.0\pm(0.0)$ &$0.0\pm(0.0)$ &$0.25\pm(0.08)$ &$0.14\pm(0.05)$ &$0.05\pm(0.02)$ &$0.03\pm(0.01)$\\
\textrm{ST+}& $0.0\pm(0.0)$ &$0.0\pm(0.0)$ &$0.73\pm(0.11)$ &$0.55\pm(0.15)$ &$0.29\pm(0.10)$ &$0.15\pm(0.07)$\\
\textrm{NMST+}& $0.0\pm(0.0)$ &$0.0\pm(0.0)$ &${\bf 0.21}\pm(0.10)$ &${\bf 0.10}\pm(0.06)$ &${\bf 0.03}\pm(0.02)$ &${\bf 0.01}\pm(0.01)$\\
\bottomrule
\end{tabular}
\end{center}
\end{table}

Table \ref{tab:wiki103_other_dec} shows that our \textrm{NMST+GPT-2} has the lowest $r_{nt}(L)$ with $L=1,000$ for all decoding algorithms compared to \textrm{VA+GPT-2} and \textrm{ST+GPT-2} proposed by \citep{welleck-etal-2020-consistency}. In other words, \textrm{NMST+} effectively prevent non-terminating sequences within 1,000 time steps regardless of decoding algorithms. Comparing with greedy search in Table \ref{tab:wiki103_ppl_rnt} ($r_{nt}(L)$ when $\epsilon=1.0\times10^{-5}$), we observe that $r_{nt}(L)$'s decrease for all setups. As we discussed in \S\ref{subsec:st}, non-terminating sequences originate from the choice of $\eos\notin\mathcal{V}_t\subsetneq\mathcal{V}$ for all $t$ where $\mathcal{V}$ is a vocabulary and $\mathcal{V}_t$ is the $t$-th proper subset of $\mathcal{V}$, considered by a decoding algorithm at the $t$-th step. 
The decoding algorithms other than greedy search are likely to have $\eos$ in $\mathcal{V}_t$ and have the lower $r_{nt}(L)$ since their $|\mathcal{V}_t|$ are greater than or equal to $|\mathcal{V}_t|=1$ of greedy search for all $t$. In the case of top-\{2, 4\} sampling, we obtain $r_{nt}(L)=0.0$ for \textrm{VA+GPT-2}. Even without \textrm{NMST+}, \textrm{VA+} can avoid non-terminating sequences if we choose a proper decoding algorithm. 
We however emphasize that \textrm{NMST+GPT-2} with $\epsilon=1.0\times10^{-5}$ has a competitive validation perplexity against \textrm{VA+GPT-2} in Table \ref{tab:wiki103_ppl_rnt} and that it is guaranteed to terminate regardless of the choice of a decoding algorithm.
We also empirically demonstrate the consistency of \textrm{NMST+\{RNN, LSTM\}} trained on WikiText-2 with respect to other decoding algorithms in \S\ref{asec:exp_other_dec_recurrent}.

\section{Conclusion}

Non-termination is a degenerate behavior we often observe when generating text from a well-trained language model.
To prevent this, \citet{welleck-etal-2020-consistency} proposed a self-terminating language model that encourages the termination probability of each sequence, which is the conditional probability of $\eos$ given a $t$-prefix and a context, to monotonically increase toward 1 as $t$ increases. In this paper, we theoretically demonstrate that monotonically increasing termination probability of each sequence is not a necessary condition for avoiding non-terminating sequences. 
We then propose a non-monotonic self-terminating language model where the termination probability for each sequence converges to 1 but not monotonically. Our non-monotonic self-terminating language models successfully address the issue of non-termination and achieve perplexities that are comparable to vanilla language models and are better than 
the original self-terminating language models.

\subsubsection*{Reproducibility Statement}
To ensure the reproducibility of our paper, we provide our code available at \url{https://github.com/nyu-dl/non-monotonic-self-terminating-lm}.
\subsubsection*{Acknowledgments}
This work was supported by 42dot, Hyundai Motor Company (under the project Uncertainty in Neural Sequence Modeling), Samsung Advanced Institute of Technology (under the project Next Generation Deep Learning: From Pattern Recognition to AI), and NSF Award 1922658 NRT-HDR: FUTURE Foundations, Translation, and Responsibility for Data Science. This work was supported in part through the NYU IT High Performance Computing resources, services, and staff expertise.


\bibliography{iclr2023_conference}
\bibliographystyle{iclr2023_conference}
\clearpage

\appendix
\setcounter{section}{0}
\renewcommand\thesection{\Alph{section}}
\newtheorem{adefinition}{Definition}[section]
\newtheorem{atheorem}{Theorem}[section]
\newtheorem{aremark}{Remark}[section]
\renewcommand\thesection{\Alph{section}}

\section*{Appendix}
\section{Definitions of Common Decoding Algorithms and their Characteristics}
In this section, we present mathematical definitions of top-$k$ sampling \citep{fan-etal-2018-hierarchical}, nucleus sampling \citep{Holtzman2020TheCC}, greedy search, and beam search. We then demonstrate whether they are incomplete probable decoding algorithms.  

\subsection{Top-k sampling}\label{asubsec:topk}
At each step $t$, top-$k$ sampling selects a subset of $k$ most probable tokens in a vocabulary $\mathcal{V}$. Top-$k$ sampling generates decoded sequences from a language model $p_\vtheta$ as follows:
\begin{adefinition}[Top-$k$ sampling \citep{fan-etal-2018-hierarchical}]
    Top-$k$ sampling  $\mathcal{S}_{\textrm{top-}k}$ generates a sequence from a language model $p_\vtheta$ given a context $\vx$ by recursively sampling $\hat{y}_t$ from
    \begin{equation}\label{aeq:topk_q}
        q_{\mathcal{S}_{\textrm{top-}k}(p_\vtheta)}(y_t=v|\hat{\vy}_{<t}, \vx)
        =
        \begin{cases}
            \displaystyle\frac{p_\vtheta(y_t=v|\hat{\vy}_{<t},\vx)}{\sum_{v'\in\mathcal{V}_t} p_\vtheta(y_t=v'|\hat{\vy}_{<t}, \vx)}, & \text{if } v\in\mathcal{V}_t,\\
            0, & \text{otherwise,}
        \end{cases}
    \end{equation}
    where 
    \begin{equation}\label{aeq:topk_vt}
        \mathcal{V}_t 
        = 
        \underset{v\in\mathcal{V}}{\arg\textrm{top-}k}~p_\vtheta(y_t=v|\hat{\vy}_{<t},\vx).
    \end{equation}
\end{adefinition}
Except the trivial case $k=|\mathcal{V}|$, we have $\emptyset\subsetneq\mathcal{V}_t\subsetneq\mathcal{V}$ for all $t$. By \eqref{aeq:topk_vt}, \eqref{eq:probable_dec} holds. From \eqref{aeq:topk_q}, we see that top-$k$ sampling satisfies \eqref{eq:incomplete_dec} and \eqref{eq:peos_dec}. Therefore, top-$k$ sampling is an incomplete probable decoding algorithm.

\subsection{Nucleus sampling}\label{asubsec:nucleus}
At each step $t$, nucleus sampling selects the smallest subset of most probable tokens in a vocabulary $\mathcal{V}$, of which total probability is higher than a given threshold $\mu$. Nucleus sampling generates decoded sequences from a language model $p_\vtheta$ as follows:
\begin{adefinition}[Nucleus sampling \citep{Holtzman2020TheCC}]
    Nucleus sampling $\mathcal{S}_{\textrm{nuc-}\mu}$ generates a sequence from a language model $p_\vtheta$ given a context $\vx$ by recursively sampling $\hat{y}_t$ from
    \begin{equation}\label{aeq:nucleus_q}
        q_{\mathcal{S}_{\textrm{nuc-}\mu}(p_\vtheta)}(y_t=v|\hat{\vy}_{<t}, \vx)
        =
        \begin{cases}
            \displaystyle\frac{p_\vtheta(y_t=v|\hat{\vy}_{<t},\vx)}{\sum_{v'\in\mathcal{V}_t} p_\vtheta(y_t=v'|\hat{\vy}_{<t}, \vx)}, & \text{if } v\in\mathcal{V}_t,\\
            0, & \text{otherwise,}
        \end{cases}
    \end{equation}
    where $\mathcal{V}_t$ is the smallest subset of $\mathcal{V}$ such that
    \begin{equation}\label{aeq:nucleus_vt}
    \sum_{v\in\mathcal{V}_t}p_\vtheta(y_t=v|\hat{\vy}_{<t},\vx)\geq\mu.
    \end{equation}
\end{adefinition}
If $\min_{v\in\mathcal{V}}p_\vtheta(y_t=v|\vy_{<t},\vx)\leq 1-\mu$ for any context $\vx$ and any $t$-prefix $\vy_{<t}$, then we have $\emptyset\subsetneq\mathcal{V}_t\subsetneq\mathcal{V}$ for all $t$. Suppose that \eqref{eq:probable_dec} does not hold for nucleus sampling. Then, this contradicts to $\mathcal{V}_t$ is the smallest subset of $\mathcal{V}$, satisfying \eqref{aeq:nucleus_vt}. From \eqref{aeq:nucleus_q}, we see that nucleus sampling satisfies \eqref{eq:incomplete_dec} and \eqref{eq:peos_dec}. Therefore, nucleus sampling is incomplete and probable.

\subsection{Beam search}\label{asubsec:beam}
Beam search is a heuristic algorithm that operates on the level of prefixes. We use the definition of beam search in \citet{welleck-etal-2020-consistency}.
\begin{adefinition}[Beam search, Definition A.2 in \citet{welleck-etal-2020-consistency}]\label{adef:beam}
    Beam search with a width (beam size) $k$, $\mathcal{S}_{\textrm{beam-}k}$, generates a sequence from a language model $p_\vtheta$ by maintaining a set of $k$ prefixes, $\mathcal{P}_t=\{\vrho^{(1)}(t),\vrho^{(2)}(t),\cdots,\vrho^{(k)}(t)\}$, at each time step $t$ where $\vrho^{(i)}(0)$ is an empty prefix for all $i$. At each step $t\in\{1,2,\cdots\}$, beam search forms a set of $k\times k$ prefixes,
    \begin{equation}\label{aeq:beam_tilde_prefix}
        \tilde{\mathcal{P}}_t=\bigcup_{\vrho\in\mathcal{P}_{t-1}}\{\vrho\circ v|v\in\mathcal{V}_t(\vrho)\},
    \end{equation}
    where $\vrho\circ v$ is concatenation and    \begin{equation}\label{aeq:beam_vt}
        \mathcal{V}_t(\vrho) 
        = 
        \underset{v\in\mathcal{V}}{\arg\textrm{top-}k}~p_\vtheta(y_t=v|\vrho,\vx).
    \end{equation}
    After forming $\tilde{\mathcal{P}}_t$, beam search selects a set of the $k$ highest scoring prefixes in $\tilde{\mathcal{P}}_t$,
    \begin{equation}\label{aeq:beam_pt}
        \mathcal{P}_t
        = 
        \underset{\vrho\in\tilde{\mathcal{P}}_t}{\arg\textrm{top-}k}~s(\vrho),
    \end{equation}
    where $s(\vrho)=\sum_{\tau=1}^t \log p_\vtheta(y_\tau=\vrho_\tau|\vrho_{<\tau},\vx)$.
    If $\vrho\in\mathcal{P}_t$ ends with $\eos$, then it does not expand further and is added to the final set $\mathcal{P}$. Beam search continues until $\mathcal{P}$ contains $k$ sequences ending with $\eos$. After that it returns the highest scoring sequence \begin{equation}\label{aeq:beam_seq}
        \hat{\vy}
        = 
        \underset{\vrho\in\mathcal{P}}{\arg\max}~s(\vrho).
    \end{equation}   
\end{adefinition}
Unlike greedy search, top-$k$ sampling, and nucleus sampling, beam search recursively expands $k$ sequences with at most $k$ different prefixes. Therefore, we cannot formalize beam search in token-level by using $q_{\mathcal{S}_{\textrm{beam-}k}}(y_t=v|\vy_{<t}, \vx)$. However, in \eqref{aeq:beam_vt}, the number of possible tokens at $t$ is at most $k\times k$. It means that $\mathcal{S}_{\textrm{beam-}k}$ may exclude $\eos$ at time $t$ if $k\leq \sqrt{|\mathcal{V}|-1}$. By using this, \citet{welleck-etal-2020-consistency} proved that a vanilla language model $p_\vtheta^{va}$ is inconsistent with respect to beam search as shown in Theorem \ref{thm:vanilla_inconsist}.

\section{Proofs for \S\ref{subsec:st}}\label{asec:st_remark_proof}
\begin{customrem}{\ref{rem:exist_nmpeos}}
    Let $\mathcal{D} = \left\{(\vx^{(1)}, \vy^{(1)}), (\vx^{(2)}, \vy^{(2)})\right\}$ be a two-instance training dataset. Assume that there exists $t_0$ such that $\vy_{<t_0}=\vy_{<t_0}^{(1)}=\vy_{<t_0}^{(2)}$. Suppose further that $t_0=|\vy^{(1)}|<|\vy^{(2)}|-1$ and $\vx=\vx^{(1)}=\vx^{(2)}$. If $\vtheta^\star$ is an optimal parameter configuration in \eqref{eq:mle_loss} over $\mathcal{D}$. Then, $p_{\vtheta^\star}\big{(}y^{(2)}_t=\eos| \vy^{(2)}_{<t},\vx\big{)}$ is non-monotonic with respect to $t$. 
\end{customrem}
\begin{proof}
    Since $\vtheta^\star$ is an optimal parameter configuration that perfectly minimizes \eqref{eq:mle_loss} and $t_0<|\vy^{(2)}|-1$, we have \begin{equation}\label{aeq:exist_nmpeos_0}
        p_{\vtheta^\star}\big{(}y_t^{(2)}=\eos|\vy^{(2)}_{<t},\vx^{(2)}\big{)}
        =
        0,
    \end{equation} for $t< t_0$. Note that $t_0=|\vy^{(1)}|\Rightarrow \vy_{t_0}^{(1)}=\eos$ and $t_0<|\vy^{(2)}|-1\Rightarrow\vy_{t_0}^{(2)}\neq\eos$. From $\vx=\vx^{(1)}=\vx^{(2)}$ and $\vy=\vy_{<t_0}^{(1)}=\vy_{<t_0}^{(2)}$, we obtain
    \begin{equation}\label{aeq:exist_nmpeos_0.5}
        p_{\vtheta^\star}\big{(}y_{t_0}^{(2)}=\eos|\vy^{(2)}_{<t_0},\vx^{(2)}\big{)}
        =
        \frac{1}{2}.    
    \end{equation}
Moreover, $t_0<|\vy^{(2)}|-1$ implies that $\vy^{(2)}_{t_0+1}\neq\eos$ which is equivalent to    
    \begin{equation}\label{aeq:exist_nmpeos_re0}
        p_{\vtheta^\star}\big{(}y_{t_0+1}^{(2)}=\eos|\vy^{(2)}_{<t_0+1},\vx^{(2)}\big{)}
        =
        0.    
    \end{equation}
From \eqref{aeq:exist_nmpeos_0}, \eqref{aeq:exist_nmpeos_0.5}, and \eqref{aeq:exist_nmpeos_re0}, we see that  $p_{\vtheta^\star}\big{(}y^{(2)}_t=\eos| \vy^{(2)}_{<t},\vx\big{)}$ is non-monotonic with respect to $t$. 
\end{proof}

\section{Proofs for \S\ref{sec:nmst}}\label{asec:nmst_thm_proof}
\begin{customthm}{\ref{thm:nmst_consist}}
    A non-monotonic self-terminating (NMST) language model defined in Definition \ref{def:nmst} is consistent with respect to any incomplete probable decoding algorithms and beam search.
\end{customthm}
\begin{proof}
    From \eqref{eq:nmst_peos}, for any $\vtheta\in\R^k$, we have
    \begin{equation*}\label{aeq:nmst_p->1}
        \lim_{t\rightarrow \infty}p^{nmst}_\vtheta(y_t=\eos|\vy_{<t},\vx) 
        = 
        1,
    \end{equation*}
    since $(1-\epsilon)^t\rightarrow0$ as $t\rightarrow \infty$ for $\epsilon\in(0,1)$ and $\sigma\left(\vu_{\eos}^\top \vh_t\right)\in(0, 1)$ for any $t$. Hence, there exists $t_{1/2}$ such that 
    \begin{equation}\label{aeq:t_1/2}
        t\geq t_{1/2}
        \Rightarrow p^{nmst}_\vtheta(y_t=\eos|\vy_{<t},\vx)
        >
        \frac{1}{2}.
    \end{equation}
    Let $\mathcal{S}$ be any incomplete probable decoding algorithm. From \eqref{eq:probable_dec} and \eqref{eq:peos_dec}, $\eos\in\mathcal{V}_t$ and $q_{\mathcal{S}(p_\vtheta^{nmst})}(y_t\neq\eos|\vy_{<t},\vx)<\frac{1}{2}$ holds for any $t\geq t_{1/2}$. Therefore, we obtain
    \begin{align}
        q_{\mathcal{S}(p_\vtheta^{nmst})}(|\vy|=\infty|\vx)
        &=
        \prod_{t=1}^\infty q_{\mathcal{S}(p_\vtheta^{nmst})}(y_t\neq\eos|\vy_{<t},\vx)\nonumber\\
        &\leq 
        \prod_{t=t_{1/2}}^\infty q_{\mathcal{S}(p_\vtheta^{nmst})}(y_t\neq\eos|\vy_{<t},\vx)\nonumber\\
        &< 
        \prod_{t=t_{1/2}}^\infty \frac{1}{2}
        \rightarrow
        0.\label{aeq:q_noinflen}
    \end{align}
    Taking expectation of \eqref{aeq:q_noinflen} over $\vx$, we finally have $q_{\mathcal{S}(p_\vtheta^{nmst})}(|\vy|=\infty)=0$ for any $\mathcal{S}$. In other words, $p_\vtheta^{nmst}$ is consistent with respect to any incomplete probable decoding algorithms.
    
    In the case of beam search $\mathcal{S}_{\textrm{beam-}k}$ defined in \S\ref{asubsec:beam}, without loss of generality, there exists $\vrho\in\mathcal{P}_{t_{1/2}}$ such that $\vrho$ does not end with $\eos$. \footnote{If there is no such $\vrho$, all $k$ sequences in $\mathcal{P}_{t_{1/2}}$ end with $\eos$. It means that $\mathcal{S}_{\textrm{beam-}k}$ returns a finite sequence, so that $p_\vtheta^{nmst}$ is consistent with respect to beam search.} Let $\mathcal{P}_{>t_{1/2}}(\vrho)$ be a set of $k$ highest scoring sequences continued from $\vrho$ by $\mathcal{S}_{\textrm{beam-}k}$. From \eqref{aeq:t_1/2}, we have
    \begin{equation*}
        p^{nmst}_\vtheta(\eos|\vrho,\vx)
        >
        p^{nmst}_\vtheta(v|\vrho,\vx)
    \end{equation*}
    for all $v\in\mathcal{V}\setminus\{\eos\}$. Hence, $\mathcal{V}_{t_{1/2}}(\vrho)$ in \eqref{aeq:beam_vt} includes $\eos$. Let $\vz=(z_1, z_2, \cdots, z_l)$ be any subsequence with $z_1\neq\eos$. Then, we have
    \begin{align}
        p_\vtheta^{nmst}(\vrho\circ\vz|\vrho,\vx)
        &=
        \prod_{i=1}^l p_\vtheta^{nmst}(z_i|\vrho\circ\vz_{<i},\vx)\nonumber\\
        &\leq
        p_\vtheta^{nmst}(z_1|\vrho,\vx)\nonumber\\
        &< 
        p_\vtheta^{nmst}(\eos|\vrho,\vx)=p_\vtheta^{nmst}(\vrho\circ\eos|\vrho,\vx),
    \end{align}
    where $\circ$ is concatenation.
    Therefore, $\vrho\circ\eos=\arg\max_{\vrho'\in\mathcal{P}_{t_{1/2}}}s(\vrho')$ holds where $s(\vrho')=\sum_{\tau=1}^t \log p_\vtheta^{nmst}(\vrho'_{\tau}|\vrho'_{<\tau},\vx)$. That is, $\vrho\circ\eos$ is the highest scoring sequence starting with $\vrho$, and we have $\vrho\circ\eos\in\mathcal{P}(\vrho)$.
    
    For each $\vrho'\in\mathcal{P}_{>t_{1/2}}(\vrho)\setminus\{\vrho\circ\eos\}$, $\vrho'$ starts with $\vrho\circ v$ for $v\in\mathcal{V}\setminus\{\eos\}$. By the same argument, we add at least one sequence ending with $\eos$ to $\mathcal{P}_{>t_{1/2}}(\vrho)$. It means that $\mathcal{P}_{>t_{1/2}}(\vrho)$ has $k$ sequences ending with $\eos$ within $t_{1/2}+k$ steps. Note that the final set $\mathcal{P}$ satisfies 
    \begin{equation}\label{aeq:nmst_beam}
        \mathcal{P}
        \subset
        \bigcup_{\vrho\in\mathcal{P}_{t_{1/2}}}\mathcal{P}_{>t_{1/2}}(\vrho).
    \end{equation}
    \Eqref{aeq:nmst_beam} implies that every sequence in $\mathcal{P}$ has the length of at most $t_{1/2}+k$. We thus obtain 
    \begin{equation}\label{aeq:beam_dec_inf_prob}
        q_{\mathcal{S}_{\textrm{beam-}k}(p_\vtheta^{nmst})}(|\vy|=\infty|\vx)\leq q_{\mathcal{S}_{\textrm{beam-}k}(p_\vtheta^{nmst})}(|\vy|>t_{1/2}+k|\vx)=0.
    \end{equation}
    Taking expectation of \eqref{aeq:beam_dec_inf_prob} over $\vx$, we see that $q_{\mathcal{S}_{\textrm{beam-}k}(p_\vtheta^{nmst})}(|\vy|=\infty)$. That is, $p_\vtheta^{nmst}$ is consistent with respect to beam search. 
\end{proof}

\section{Experimental Details}\label{asec:exp_detail}
In this section, we describe our models and optimization processes used in \S\ref{sec:exp}.
\paragraph{RNN and LSTM on WikiText-2} We use word tokenization for WikiText-2. We train \textrm{RNN} with $\tanh$ activations \citep{elman1990finding} and \textrm{LSTM} \citep{hochreiter1997long} on WikiText-2. Both \textrm{RNN} and \textrm{LSTM}
have 2 layers. Each layer has 256 hidden units for \textrm{RNN} and 512 hidden units for \textrm{LSTM}. The sizes of input and output embedding layers are $256$ and $512$ for \textrm{RNN} and \textrm{LSTM}, respectively. We use weight tying to share the weights between the input and output embedding layers for both models. We apply dropout \citep{srivastava2014dropout} with drop probabilities of $0.3$ and $0.5$ to \textrm{RNN} and \textrm{LSTM} accordingly. 
For each model, we perform 10 random runs with a batch size of 32 for 70 epochs. To maximize the log-likelihood presented in \eqref{eq:mle_loss}, we use AdamW \citep{loshchilov2017decoupled} with an initial learning rate of $0.001$, $\beta_1=0.9$, $\beta_2=0.99$, weight decay of $0.01$, and learning rate decay which halves the learning rate if the validation perplexity does not improve for a training epoch. To avoid overfitting, we additionally use early stopping, which terminates training if the validation perplexity does not improve upon the best score attained so far for $10$ consecutive epochs. In most cases, the training ends within 50 epochs.

\paragraph{GPT-2 on WikiText-103} We use BPE tokenization\footnote{\url{https://github.com/huggingface/tokenizers}} \citep{sennrich2015neural}  and the pretrained \textrm{GPT-2}\footnote{\url{https://github.com/huggingface/transformers}} \citep{radford2019language} with 124 million parameters, provided by \verb|HuggingFace|. \textrm{GPT-2} can handle up to 1,024 tokens. We apply dropout \citep{srivastava2014dropout} with a drop probability of $0.1$ to \textrm{GPT-2}. We finetune \textrm{GPT-2} for 300,000 steps while ensuring that all runs continue for at least 250,000 steps. 
To minimize the number of padding tokens in every batch for computational efficiency, we bucket the dataset into sequences of similar lengths, and each batch contains a maximum of 1,024 total tokens.
To maximize the log-likelihood function in \eqref{eq:mle_loss}, we use AdamW \citep{loshchilov2017decoupled} with an initial learning rate of $5.0\times 10^{-5}$, $\beta_1=0.9$, $\beta_2=0.99$, weight decay of 0.01, and linear learning rate decay over $500,000$ steps. 

\clearpage

\section{Additional Plots and Tables for \S\ref{sec:exp}}\label{asec:exp}
In this section, we demonstrate additional plots and tables for \S\ref{sec:exp}.
\subsection{Additional plots for \S\ref{subsubsec:wiki2}}\label{asubsec:wiki2_eps_vs_ppl}
\begin{figure}[h]
\centerline{\includegraphics[width=1.0\linewidth]{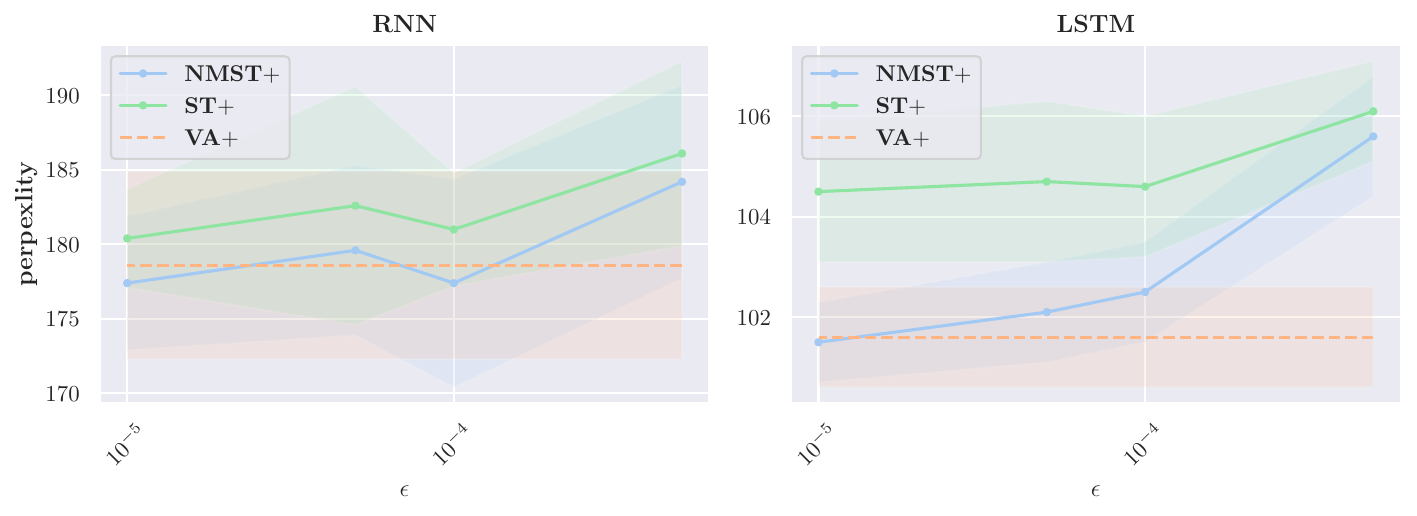}}
\caption{Validation perplexities as a function of $\epsilon$ in log-linear scale for all configurations of \textrm{RNN} (left) and \textrm{LSTM} (right), which are trained on WikiText-2. We present their average (curve) $\pm$ st.dev. (shaded area) across 10 random experiments. For all $\epsilon$ and architectures, \textrm{NMST+} has better validation perplexities than \textrm{ST+}. As $\epsilon$ increases, the validation perplexities of both \textrm{NMST+RNN} and \textrm{NMST+LSTM} degrade compared to those of \textrm{VA+RNN} and \textrm{VA+LSTM}. We thus need to search for an optimal $\epsilon$ to avoid degradation of validation perplexity when applying \textrm{NMST+} to our language model.} 
\label{afig:wiki2_eps_ppl}
\end{figure}
\subsection{Additional plots for \S\ref{subsubsec:wiki103}}\label{asubsec:wiki103_eps_vs_ppl}
\begin{figure}[h]
\centerline{\includegraphics[width=1.0\linewidth]{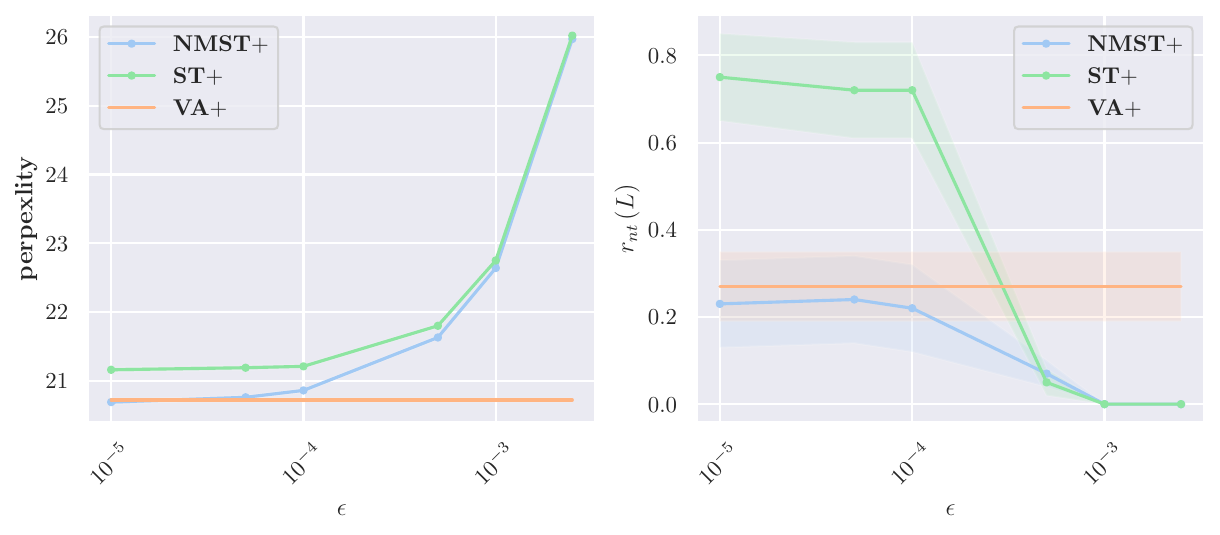}}
\caption{We present the average (curve) $\pm$ st.dev. (shaded area) of validation perplexities (left) and non-termnation ratios $r_{nt}(L)$ (right) with greedy search across 10 random runs for all considered setups of \textrm{GPT-2} finetuned on WikiText-130 in log-linear scale. For $r_{nt}(L)$, we use $L=1,000$ because \textrm{GPT-2} has a context window size of $1,024$. For all $\epsilon$, \textrm{NMST+GPT-2} outperforms \textrm{ST+GPT-2} in terms of the average validation perplexity. When $\epsilon$ is small, $r_nt(L)$ of \textrm{ST+GPT-2} explodes. It means that \textrm{ST+GPT-2} with small $\epsilon$ cannot prevent non-terminating sequences. However, our \textrm{NMST+GPT-2} effectively reduces $r_{nt}(L)$ compared to \textrm{VA+GPT-2} for every $\epsilon$, and the validation perplexity degradation is smaller than that of \textrm{ST+GPT-2} proposed by \citet{welleck-etal-2020-consistency}.}
\label{afig:wiki103_eps_ppl_rnt}
\end{figure}

\subsection{Additional tables for Table \ref{tab:wiki103_ex_seq}}\label{asubsec:wiki103_ex_seq}
\begin{table}[h]
\caption{Given a context in a validation instance of WikiText-103, we present example continuations of \textrm{\{VA, ST, NMST\}+GPT-2} when using greedy search. We select $\epsilon=1.0\times 10^{-5}$ for \textrm{\{ST, NMST\}+GPT-2} because it is optimal in terms of validation perplexities in Table \ref{tab:wiki103_ppl_rnt}. Unlike \textrm{\{VA, ST\}+GPT-2}, \textrm{NMST+GPT-2} improves the quality of the sequence by avoiding repetitive tokens and ending with $\eos$ when the given context leads \textrm{VA+GPT-2} to non-terminate within $1,000$ steps.}\label{atab:wiki103_ex_seq}
\begin{center}
\begin{tabular}{p{0.09\linewidth} | p{0.84\linewidth}}
    \toprule
    Context &  \textit{The single made its Irish Singles Chart debut at}\\
    \midrule
    \textrm{VA+} & number three on the issue date of November 18, 2005. It debuted at number one on the chart for the issue date of December 7, 2005. The single debuted at number two on the chart for the issue date of January 4, 2006. The single debuted at number one on the Irish Singles Chart for the issue date of March 4, 2006. The single debuted at number two on the Irish Singles Chart for the issue date of June 4, 2006. The single debuted at number one on the Irish Singles Chart for the issue date ...\\
     \midrule
     \textrm{ST+} &  number three on the chart issue dated March 16, 2010. It debuted at number two on the UK Singles Chart on the chart issue dated March 16, 2010, and peaked at number one on the chart issue dated March 16, 2010. The single was certified gold by the British Phonographic Industry ( BPI ) for shipments of over 15 @,@ 000 copies. The single debuted at number two on the Irish Singles Chart on the chart issue dated March 16, 2010, and peaked at number one on the chart issue dated March 16, 2010. The single was certified gold by the Irish Recorded Music Association ( IRMA ) for shipments of over 15 @,@ 000 copies. The single was ...\\
     \midrule
     \textrm{NMST+} & number twenty @-@ seven on the week ending March 26, 2010, and peaked at number three on the week ending March 27, 2010. It was certified gold by the Recording Industry Association of Ireland ( RIAA ) for shipment of 500 @,@ 000 copies of the single. The single was certified gold by the Recording Industry Association of Ireland ( RIANZ ) for shipment of 500 @,@ 000 copies of the single.$\eos$\\
     \hline
     \midrule
     Context &  \textit{Despite the expensive reconstructions, both vessels were considered}\\
     \midrule
     \textrm{VA+} & to be of sufficient quality to be considered for use in the Grand Fleet. The first, the British @-@ built, British @-@ built, British @-@ built, British @-@ built, British @-@ built, British @-@ built, British @-@ built, British @-@ built, British @-@ built, British @-@ built, British @-@ built, British @-@ built, British @-@ built, British @-@ built, British @-@ built, British @-@ built, British @-@ built, British @-@ built, British @-@ built, British @-@ built, British @-@ built, British @-@ built, British @-@ built, British @-@ built, British @-@ built, British @-@ built, British @-@ built, British @-@ built, British @-@ built, British @-@ built, British @-@ built, British @-@ built, British @-@ built, British @-@ built, British @-@ built, British @-@ built, British @-@ built, British @-@ built, British @-@ built, British @-@ built, British @-@ built, British @-@ built, British @-@ built, British @-@ built, British @-@ built, British @-@ built, British @-@ built ...\\
     \midrule
     \textrm{ST+} &  to be of sufficient quality to be considered a part of the Royal Navy, and were assigned to the Channel Fleet. The two ships were transferred to the Mediterranean Fleet in early 1915, and were transferred to the North America and West Indies Station in early 1916. They were transferred to the Mediterranean Fleet in early 1917, and were transferred to the North America and West Indies Station in early 1918. They were transferred to the North America and West Indies Station in early 1918, and were transferred to the Mediterranean Fleet in early 1918. They were transferred to the North America and West Indies Station in late 1918, and ...\\
     \midrule
     \textrm{NMST+} & obsolescent by the time of the Second World War, and were eventually scrapped in favour of the much more powerful British battleship HMS Dreadnought.$\eos$\\
     \bottomrule
     \end{tabular}
\end{center}
\end{table}

\begin{table}[t]
\begin{center}
\begin{tabular}{p{0.09\linewidth} | p{0.84\linewidth}}
    \toprule
    Context &  \textit{On the July 21, 2014 episode of Raw,}\\
     \midrule
     \textrm{VA+} & Triple H and Randy Orton were involved in a tag team match, in which Triple H and Orton were accompanied by Triple H and Randy Orton's manager, Mr. Kennedy. Orton was accompanied by Orton's manager, Mr. Kennedy, and Orton's tag team partner, Mr. Kennedy. Triple H and Orton were accompanied by Orton's manager, Mr. Kennedy, and Orton's tag team partner, Mr. Kennedy. Triple H and Orton were accompanied by Orton's manager, Mr. Kennedy, and Orton's manager, Mr. Kennedy. Triple H and Orton were accompanied by Orton's manager, Mr. Kennedy, and Orton's manager, Mr. Kennedy. Triple H and Orton were accompanied by Orton's manager, Mr. Kennedy, and Orton's manager, Mr. Kennedy. Triple H ...\\
     \midrule
     \textrm{ST+} & Triple H defeated Randy Orton to win the World Heavyweight Championship. On the July 27 episode of Raw, Triple H defeated Randy Orton to win the World Heavyweight Championship. On the July 30 episode of Raw, Triple H defeated Randy Orton to win the World Heavyweight Championship. Triple H then defeated Randy Orton to win the World Heavyweight Championship. On the August 2 episode of Raw, Triple H defeated Randy Orton to win the World Heavyweight Championship. Triple H then defeated Randy Orton to win the World Heavyweight Championship. Triple H then defeated Orton to win the World Heavyweight Championship. Triple H then defeated Orton to win the World Heavyweight Championship. Triple H then defeated Orton to win the World Heavyweight Championship. Triple H then defeated Orton to win the World Heavyweight Championship. Triple H then defeated Orton to win the World Heavyweight Championship. Triple H then defeated ...\\
     \midrule
     \textrm{NMST+} & Triple H defeated Randy Orton to win the WWE Championship for the third time.$\eos$\\
     \hline
     \midrule
Context &  \textit{On the west side of the Naktong,}\\
     \midrule
     \textrm{VA+} & the NK 6th Division was reinforced by the NK 7th Division, which was reinforced by the NK 6th Division, which was reinforced by the NK 7th Division, which was reinforced by the NK 6th Division, which was reinforced by the NK 6th Division, which was reinforced by the NK 6th Division, which was reinforced by the NK 6th Division, which was reinforced by the NK 6th Division, which was reinforced by the NK 6th Division, which was reinforced by the NK 6th Division, which was reinforced by the NK 6th Division, which was reinforced by the NK 6th Division, which was reinforced by the NK 6th Division, which was reinforced by the NK ...\\
     \midrule
     \textrm{ST+} & the 2nd Battalion, 27th Infantry Regiment, which had been holding up the North Koreans in the hills east of the Naktong, was ordered to withdraw to the Naktong itself. The 2nd Battalion, 27th Infantry Regiment, which had been holding up the North Koreans in the hills east of the Naktong, was ordered to withdraw to the Naktong itself. The 2nd Battalion, 27th Infantry Regiment, which had been holding up the North Koreans in the hills east of the Naktong, was ordered to withdraw to the Naktong itself. The 2nd Battalion, 27th Infantry Regiment, which had been holding up the North Koreans in the hills east of the Naktong, was ordered to withdraw to the Naktong itself. The 2nd Battalion, 27th Infantry Regiment, which had been ...\\
     \midrule
     \textrm{NMST+} & the Naktong River rises to a height of 1 @,@ 000 metres ( 3 @,@ 300 ft ) above the surrounding terrain. The Naktong River is the longest river in North and South Korea, and the longest in the whole of North Korea. The Naktong is the longest river in North Korea, and the longest in the whole of North Korea. The river is the longest in the entire country, and the longest in the whole of North Korea.$\eos$\\
     \bottomrule
     \end{tabular}
\end{center}
\end{table}

\begin{table}[t]
\begin{center}
\begin{tabular}{p{0.09\linewidth} | p{0.84\linewidth}}
    \toprule
      Context &  \textit{During spring training in a game against the Boston Red}\\
     \midrule
     \textrm{VA+} & Sox on April 4, 2009, Kinsler was placed on the 15 @-@ day disabled list with a strained right hamstring. He returned to the active roster on April 7, and was placed on the 15 @-@ day DL with a strained right hamstring. He was placed on the 15 @-@ day DL on May 4, and was placed on the 15 @-@ day DL on May 5. On May 6, he was placed on the 15 @-@ day DL with a strained right hamstring. He was placed on the 15 @-@ day DL on May 9. On May 10, he was placed on the 15 @-@ day DL with a strained right hamstring. He was placed on the 15 @-@ day DL on May 11. He was placed on the 15 @-@ day DL on May 13. He was placed on the 15 @-@ day DL on May 20. He was placed on the 15 @-@ day DL on May 24. He was placed on the 15 @-@ day DL on May 28. He was placed on the 15 @-@ day DL on June 2. He was placed on ...\\
     \midrule
     \textrm{ST+} & Sox on April 20, 2011, he was placed on the disabled list with a back injury. He returned to the disabled list on May 10, 2011, and was activated on May 17. He was activated on May 20 and made his first appearance on May 21. He was activated on June 2 and made his first appearance on June 4. He was activated on June 8 and made his first appearance on June 10. He was activated on June 15 and made his first appearance on June 17. He was activated on June 20 and made his first appearance on June 23. He was activated on June 29 and made his first appearance on July 1. He was activated on July 1 and made his first appearance on July 4. He was activated on July 6 and made his first appearance on July 10. He was activated on July 14 and made his first appearance on July 16. He was activated on July 20 and made his first appearance on July 23. He was ...\\
     \midrule
     \textrm{NMST+} & Sox on April 16, 2010, the Yankees signed Rivera to a one @-@ year, \$ 2 @.@ 5 million contract. He made his major league debut on April 21, 2010, against the Boston Red Sox. He pitched a scoreless inning in the first inning of the first game of the 2010 World Series against the New York Mets. On May 1, 2010, Rivera was traded to the Pittsburgh Pirates in exchange for J. J. Hardy.$\eos$\\
     \bottomrule
     \end{tabular}
\end{center}
\end{table}

\clearpage
\subsection{Additional plots for Figure \ref{fig:wiki103_gpt_peos}}\label{asubsec:wiki103_gpt_peos}
\begin{figure}[h]
\centerline{\includegraphics[trim=0.3cm 0cm 0.5cm 0.2cm, clip,width=0.58\linewidth]{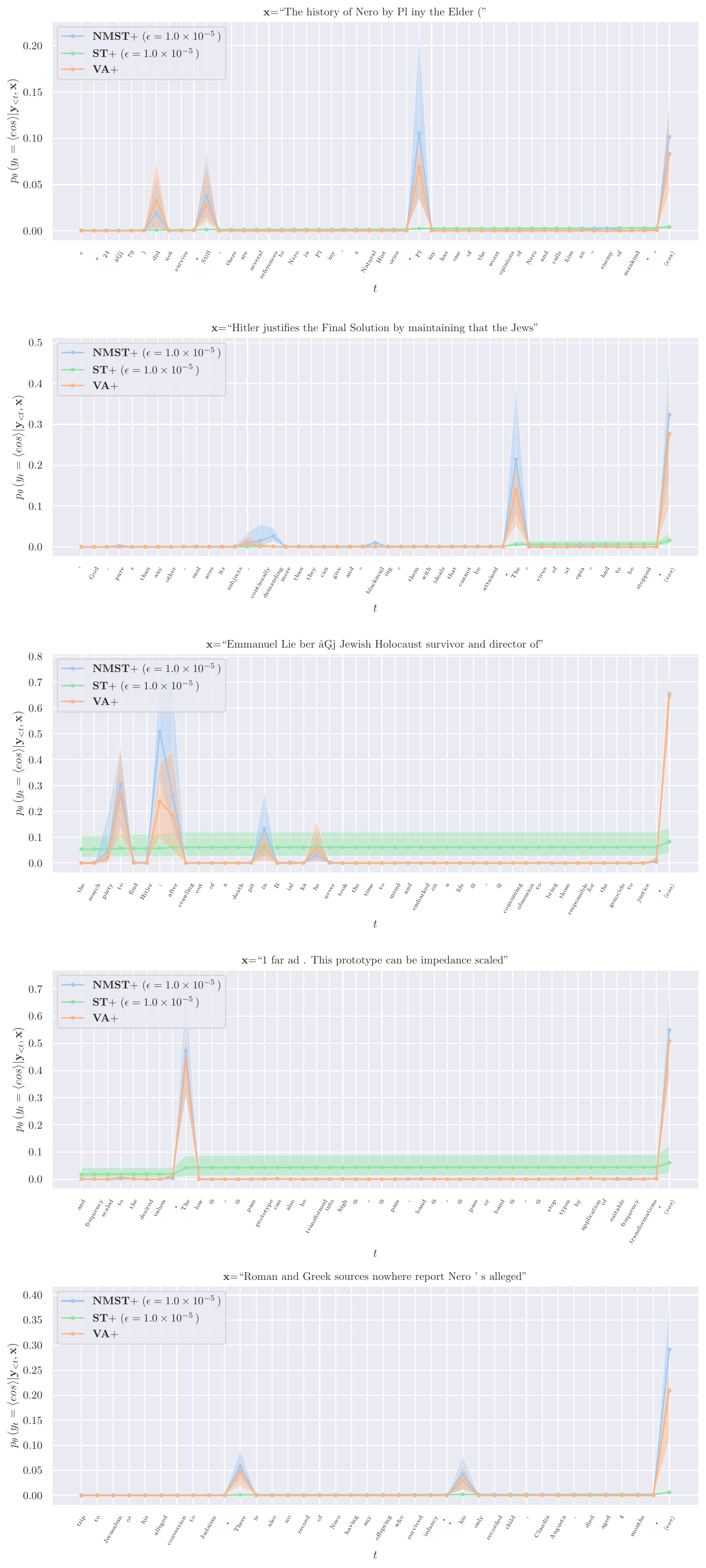}}
\vskip -15pt
\caption{
Additional plots of $p_\theta(y_t=\eos|\vy_{<t},\vx)$ as a function of $t$ for validation instances of WikiText-103 where $p_\vtheta$'s are \textrm{\{VA, ST, NMST\}+GPT-2}. For \textrm{\{ST, NMST\}+GPT-2}, we choose $\epsilon=1.0\times 10^{-5}$ because it is optimal in terms of validation perplexities in Table \ref{tab:wiki103_ppl_rnt}. Instead of $t$, we tag the $t$-th ground truth token. We report their mean (curve) $\pm$ st.dev. (shaded area) across 10 random runs. 
Unlike \textrm{ST+GPT-2}, \textrm{NMST+GPT-2} exhibits non-monotonic behaviors at plausibly terminating steps (e.g., after red marked tokens such as periods).
} 
\label{afig:wiki103_gpt_peos}
\end{figure}

\clearpage
\section{Consistency with respect to Other Decoding Algorithms for RNN and LSTM}\label{asec:exp_other_dec_recurrent}
We validate the consistency of our proposed non-monotonic self-terminating (NMST) language model when using decoding algorithms other than greedy search, such as top-$k$ sampling \citep{fan-etal-2018-hierarchical}, nucleus sampling \citep{Holtzman2020TheCC}, and beam search. All experimental setups and notations are the same as Section \S\ref{sec:exp}. We use top-\{2, 4\} sampling, nucleus-\{0.2, 0.4\} sampling, and beam search with a width of \{2, 4\} (beam-\{2, 4\}) to generate sequences from 
\textrm{NMST+\{RNN, LSTM\}}
trained on Wikitext-2 with $\epsilon=1.0\times10^{-5}$. The choice of $\epsilon=1.0\times10^{-5}$ is made based on the validation perplexities in Table \ref{tab:wiki2_ppl}. Since the validation perplexity does not change with decoding algorithms, we 
focus on the average ($\pm$st.dev.) non termination ratios, $r_{nt}(L)$'s, across 10 random runs as a function of $L$, for each decoding algorithm in Figure \ref{afig:wiki2_r_nt_others}. We also plot the evolution of $r_{nt}(L)$'s for 
\textrm{VA+\{RNN, LSTM\}} and \textrm{ST+\{RNN, LSTM\}} 
of $\epsilon=1.0\times10^{-5}$ as we vary $L$. 

\begin{figure}[h]
\centerline{\includegraphics[width=1.0\linewidth]{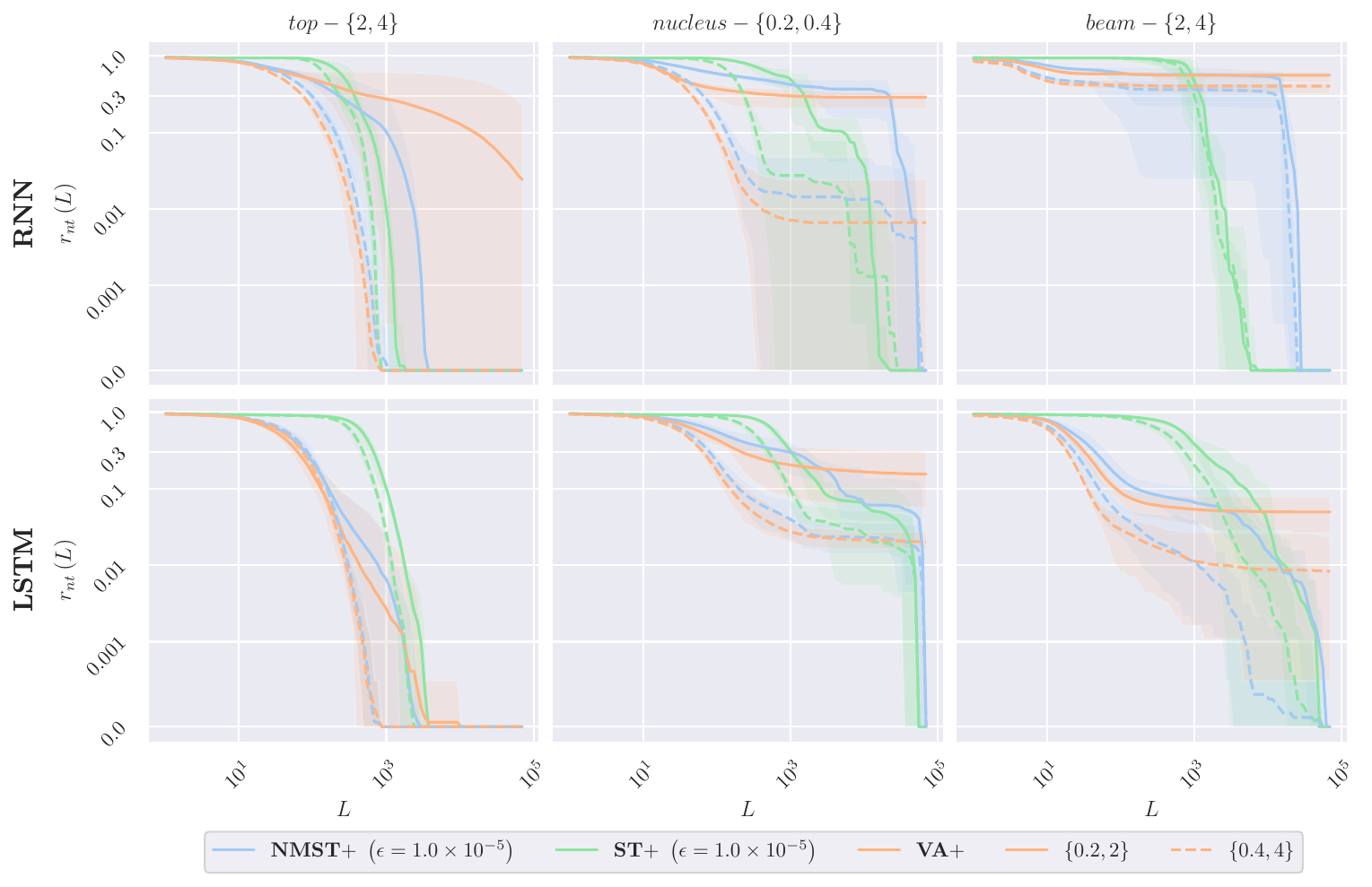}}
\vskip -10pt
\caption{Non-termination ratios, $r_{nt}(L)$'s, of sequences generated from all variants of
\textrm{RNN} (top) and \textrm{LSTM} (bottom),
trained on WikiText-2, when using top-$k$ sampling (left), nucleus sampling (middle), and beam search (right), as a function of $L$ in log-log scale. We use the first $10$ tokens of every WikiText-2 validation instance as a context. 
We present their average (curve) with their min-max range (shaded area) across 10 random experiments. \textrm{VA+} (orange) displays inconsistency ($\lim_{L\rightarrow\infty} r_{nt}(L)\nrightarrow 0$) for all combinations of model architectures and decoding algorithms, except in \textrm{VA+RNN} using top-4 (orange dashed in top left) and \textrm{VA+LSTM} using top-\{2,4\} (orange solid and dashed in top left, respectively). 
On the other hand, \textrm{NMST+} (blue) and \textrm{ST+} (green) show consistency ($\lim_{L\rightarrow\infty} r_{nt}(L)\rightarrow 0$) across all configurations. By using decoding algorithms other than greedy search, 
\textrm{VA+LSTM}
can avoid non-terminating sequences (e.g., top-\{2, 4\}). However, as shown in Table \ref{tab:wiki2_ppl}, 
\textrm{NMST+\{RNN, LSTM\}}
not only have better validation perplexities than 
\textrm{VA+\{RNN, LSTM\}} 
and 
\textrm{ST+\{RNN, LSTM\}} 
but also are consistent with respect to all decoding algorithms.} 
\label{afig:wiki2_r_nt_others}
\end{figure}
\clearpage

\section{Analysis of Predicted Sequence Length Distributions in \S\ref{subsubsec:wiki2}}\label{asec:wiki2_len_dist}
We investigate whether our proposed non-monotonic self-terminating (NMST+) language model matches the data length distribution better than the baselines: i) a vanilla (VA+) language model and ii) a self-terminating (ST+) language model. For this, we compare the length distributions of predicted sequences from \{VA, ST, NMST\}+LSTM trained on WikiText-2 with the data length distribution of ground truth sequences in the WikiText-2 validation dataset, $\mathcal{D}_{val}$, when using greedy search. All experimental setups and notations are the same as \S\ref{subsubsec:wiki2}.

Figure \ref{afig:wiki2_lstm_len_dist_1e-5} shows the length distributions of \{VA, ST, NMST\}+LSTM, and $\mathcal{D}_{val}$. For \{ST, NMST\}+LSTM, we use $\epsilon=1\times10^{-5}$ because this choice is optimal in terms of validation perplexities based on Table \ref{tab:wiki2_ppl}. We observe that the length distributions of predicted sequences from NMST+LSTM is closer to the data length distribution of $\mathcal{D}_{val}$, than those of predicted sequences from VA+LSTM and ST+LSTM. 

\begin{figure}[h]
\centerline{\includegraphics[width=1.0\linewidth]{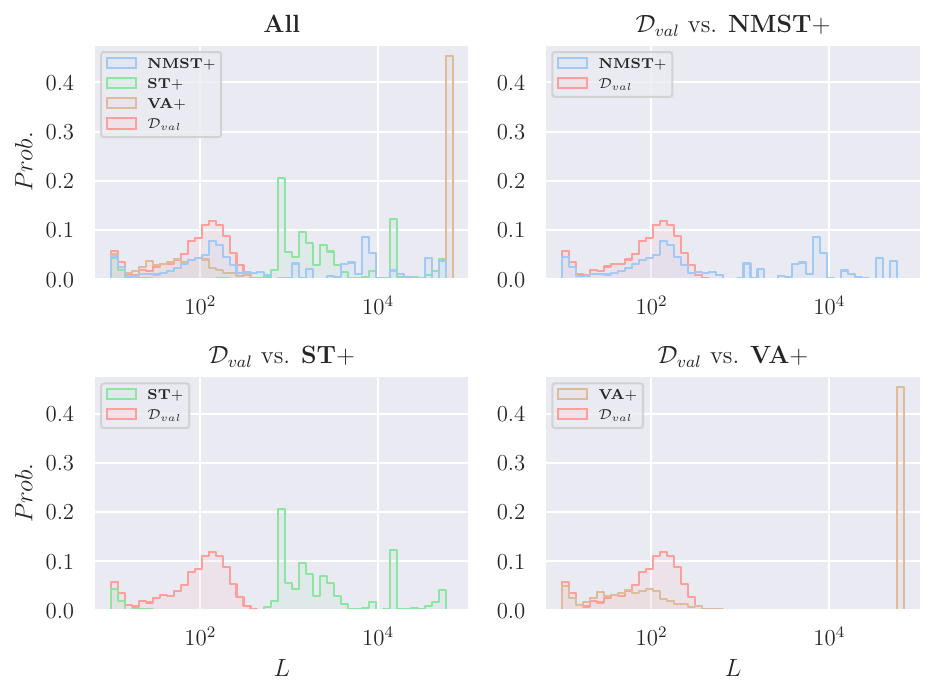}}
\vskip -15pt
\caption{Length distributions of generated sequences from \{VA, ST, NMST\}+LSTM trained on WikiText-2 and the data length distribution of ground truth sequences in WikiText-2 validation dataset, $\mathcal{D}_{val}$. For \{ST, NMST\}+LSTM, we select $\epsilon=1.0\times 10^{-5}$ since it is optimal in terms of validation perplexities in Table \ref{tab:wiki2_ppl}. NMST+LSTM better models the length distribution of $\mathcal{D}_{val}$ than both VA+LSTM and ST+LSTM.} 
\label{afig:wiki2_lstm_len_dist_1e-5}
\end{figure}

Furthermore, we can tune $\epsilon$ to make the predicted length distribution of NMST+LSTM agree with the ground truth length distribution of $\mathcal{D}_{val}$. In Figure \ref{afig:wiki2_lstm_len_dist_5e-4}, we compare NMST+LSTM's predicted length distribution of $\epsilon=5\times10^{-4}$ with that of $\epsilon=1\times10^{-5}$. We see that $\epsilon=5\times10^{-4}$ better models the data length distribution than $\epsilon=5\times10^{-4}$. However, in this case, the average validation perplexity of NMST+LSTM degrades from 101.5 ($\epsilon=1\times10^{-5}$) to 105.6 ($\epsilon=5\times10^{-4}$) as shown in Table \ref{tab:wiki2_ppl}. 

\begin{figure}[t]
\centerline{\includegraphics[width=1.0\linewidth]{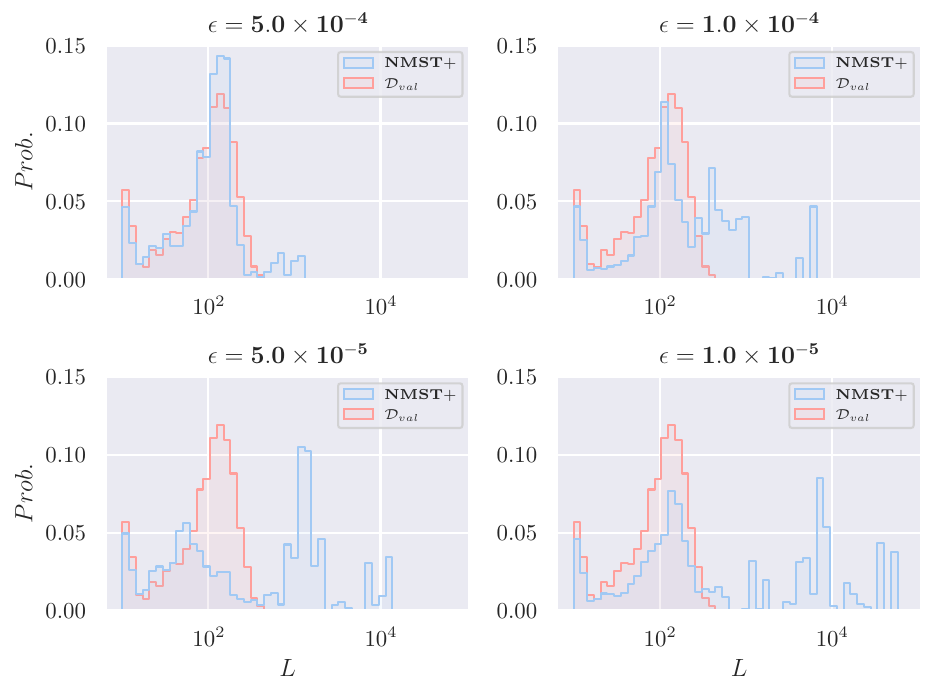}}
\vskip -15pt
\caption{Length distributions of predicted sequences from NMST+LSTM trained on WikiText-2 for various $\epsilon$'s and the data length distribution of ground truth sequences in WikiText-2 validation dataset, $\mathcal{D}_{val}$. The length distribution of NMST+LSTM using $\epsilon=5.0\times10^{-5}$ matches the data length distribution of $\mathcal{D}_{val}$ better than that of NMST+LSTM using $\epsilon=1.0\times 10^{-4}$. We can choose $\epsilon$ to make the predicted length distribution of NMST+LSTM agree with the ground truth length distribution.} 
\label{afig:wiki2_lstm_len_dist_5e-4}
\end{figure}

\end{document}